  \providecommand\BibTeX{{%
    \normalfont B\kern-0.5em{\scshape i\kern-0.25em b}\kern-0.8em\TeX}}}
\crefname{construction}{Construction}{Constructions}
\crefname{claim}{Claim}{Claims}
\crefname{paragraph}{Paragraph}{Paragraphs}
\crefname{observation}{Observation}{Observations}
\crefname{theorem}{Theorem}{Theorems}
\crefname{lemma}{Lemma}{Lemmata}
\crefname{proposition}{Proposition}{Propositions}
\crefname{corollary}{Corollary}{Corollaries}
\crefname{remark}{Remark}{Remarks}
\crefname{section}{Section}{sections}
\crefname{chapter}{Chapter}{Chapters}
\crefname{figure}{Figure}{Figures}
\crefname{table}{Table}{Tables}
\crefname{definition}{Definition}{Definitions}
\crefname{algorithm}{Algorithm}{Algorithms}
\crefname{equation}{Equation}{Equations}
\crefname{appendix}{Appendix}{Appendices}
\newcounter{parentnumber}
\newtheorem{lemma}{Lemma}
\newtheorem{example}{Example}
\newtheorem{remark}{Remark}
\theoremstyle{definition}
\newtheorem{definition}{Definition}
\tikzstyle{privplate} = [draw, rectangle, dashed, fit=#1]
\tikzstyle{privplate caption} = [caption, node distance=0, inner sep=0pt,
\tikzstyle{decision} = [rectangle, text centered, minimum width=2em, minimum height=2em, draw=black]
\let\olddbar\|
\renewcommand{\|}{\,{\olddbar}\,}
\newcommand{\util}{U}
\newcommand{\param}{{\boldsymbol{\theta}}}
\newcommand{\Param}{\Theta}
\newcommand{\vparam}{{\boldsymbol{\theta}}}
\renewcommand{\Pr}{\mathbb{P}}
\newcommand{\E}{\mathbb{E}}
\newcommand{\CX}{\mathcal{X}}
\newcommand{\CY}{\mathcal{Y}}
\newcommand{\bx}{\mathbf{x}}
\newcommand{\by}{\mathbf{y}}
\newcommand{\bct}{\mathbf{b}}
\newcommand{\EMC}{\Delta}
\newcommand{\egpol}{\pol_{\text{egal}}}
\newcommand{\ut}{u}
\DeclareMathOperator*{\argmax}{arg\,max}
\newcommand \gender {\mathrm{gen}}
\newcommand \sk {\mathrm{sk}}
\newcommand \gpa {\mathrm{gpa}}
\newcommand \tal {\mathrm{tal}}
\newcommand \pol {\ensuremath{\pi}}
\newcommand \Pol {\ensuremath{\Pi}}
\newcommand \bel {\ensuremath{\beta}}
\newcommand \act {\mathbf{a}} 
\newcommand \Act {\ensuremath{\mathcal{A}}}
\newcommand \defn {\mathrel{\triangleq}}
\newcommand{\pop}{\mathcal{N}}
\newcommand{\nPop}{N}
\newcommand{\distr}{\xi}
\renewcommand{\colon}{\, : \,}
\newif\ifnotes
\newif\iflongversion
\newif\ifshortversion
\newif\ifpreprint
\newcommand\ind[1] {\mathbb{I}\left\lbrace #1 \right\rbrace}
\newcommand\cset[2] {\left\{#1 ~\middle|~ #2\right\}}
\newcommand\Reals{{\mathbb{R}}}
\newcommand{\devswap}{\text{Dev}_\text{swap}}
\newcommand{\devchange}{\text{Dev}_\text{local}}
\tikzstyle{place}=[circle,draw=black,inner sep=0mm, minimum size=6mm]
\tikzstyle{utility}=[diamond,draw=black,draw=blue!50,fill=blue!10,inner sep=0mm, minimum size=8mm]
\tikzstyle{select}=[rectangle,draw=black,draw=blue!50,fill=blue!10,inner sep=0mm, minimum size=6mm]
\tikzstyle{hidden}=[dashed,draw=black,fill=red!10]
\tikzstyle{RV}=[circle,draw=black,draw=blue!50,fill=blue!10,inner sep=0mm, minimum size=6mm]
\tikzstyle{transition}=[rectangle,draw=black!50,fill=black!20,thick]
\tikzstyle{someset}=[circle,draw=black,minimum size=8mm]
\tikzstyle{point}=[circle,draw=black,fill=black]
\newcommand{\cdcomment}[1]{[CD: \textcolor{red}{#1}]}
\DeclareMathOperator{\emc}{EMC}
\DeclareMathOperator{\shapley}{Shapley}
\begin{document}

\title{On Meritocracy in Optimal Set Selection}

\author{Thomas Kleine Buening}
\email{thomkl@ifi.uio.no}
\affiliation{
\institution{University of Oslo}
\streetaddress{Gaustadalléen 23B}
\city{Oslo}
\country{Norway}
}

\author{Meirav Segal}
\email{meiravs@ifi.uio.no}
\affiliation{%
  \institution{University of Oslo}
\streetaddress{Gaustadalléen 23B}
\city{Oslo}
\country{Norway}
}

\author{Debabrota Basu}
\email{debabrota.basu@inria.fr}
\affiliation{%
  \institution{Scool, Inria, Univ. Lille, Centrale Lille UMR 9189 – CRIStAL}
  \streetaddress{Parc Scientifique de la Haute-Borne}
  \city{Villeneuve-d'Ascq}
  \country{France}}

\author{Anne-Marie George}
\email{annemage@ifi.uio.no}
\affiliation{%
  \institution{University of Oslo}
\streetaddress{Gaustadalléen 23B}
\city{Oslo}
\country{Norway}
}

\author{Christos Dimitrakakis}
\email{christos.dimitrakakis@gmail.com}
\affiliation{%
  \institution{University of Neuchatel}
  \streetaddress{Av. du Premier-Mars 26}
  \city{Neuchatel}
  \country{Switzerland}
}

\renewcommand{\shortauthors}{T.\ K.\ Buening, M.\ Segal, D.\ Basu, A.\ M.\ George, C.\ Dimitrakakis}


\begin{abstract}
Typically, merit is defined with respect to some intrinsic measure of
worth. We instead consider a setting where an individual's worth is
\emph{relative}: when a Decision Maker (DM) selects a set of
individuals from a population to maximise expected utility, it is
natural to consider the \emph{Expected Marginal Contribution} (EMC) of
each person to the utility. We show that this notion satisfies an
axiomatic definition of fairness for this setting. We also show that
for certain policy structures, this notion of fairness is aligned with
maximising expected utility, while for linear utility functions it is
identical to the Shapley value. However, for certain natural policies,
such as those that select individuals with a specific set of
attributes (e.g. high enough test scores for college admissions),
there is a trade-off between meritocracy and utility maximisation.  We
analyse the effect of constraints on the policy on both utility and
fairness in extensive experiments based on college admissions and
outcomes in Norwegian universities.
\end{abstract}





\maketitle

\section{Introduction}
\label{sec:introduction}

Meritocracy~\citep{CED:meirtocracy} is the idea that individuals should be allocated
opportunities, resources and power in proportion to their talent,
abilities or achievements. However, it is hard to envisage an intrinsic
measure of individual worth: for example, merit for a particular
position depends on the required qualifications. When considering
filling multiple positions simultaneously, merit also depends on who
else has been selected. In this paper, we examine a definition of
meritocracy for such \emph{set selection} problems, where the decision
maker (DM) aims to maximise expected utility. Intuitively, meritocracy
can be defined in terms of how much each individual contributes to the
utility. As we discuss extensively in this paper, this intuition holds
true for particular utility functions, as long as the DM's selection
policy is not constrained by other fairness considerations.

Set selection problems appear in many settings where the DM must
select a subset from a candidate population, such as college
admissions or hiring decisions. The DM wishes to find a selection
policy that maximises utility in expectation. On the other hand,
meritocracy demands that individuals with higher merit have a higher
probability of being selected. Our first question is how to define
meritocracy in this setting.

In particular, as long as there is an inherent, static measure of
worth, what is or is not a merit-based (or meritocractic fair)
decision is well-defined.  This is typically the case when there is an
undisputed or dictated scoring system that induces a ranking over
individuals~\citep{young1994rise}.  In the absence of such a scoring
system, the DM may base the merit of an individual on the utility
function instead.  However, while a utility function over sets
supplies us with a definite best set (or collection thereof), it does
not generate individual-level judgements.

One possibility is to use the Shapley value~\cite{shapley}, a solution
concept from cooperative game
theory~\citep{chalkiadakis2011computational}, founded on the idea that
the potential contributions of an individual to the utility correspond
to their merit.
However, contributions to the utility can be relative and dependent on circumstances. More specifically, in this paper we focus on inherently non-linear utility functions defined over sets so that individual contributions to utility depend on who else is being selected. 
The following example illustrates this.
\begin{example}\label{example:example_2}
    In another scenario, Alice (A), Bob (B), Carlos (C) and David (D) may be candidates for an engineering team. Then, the overall utility we can extract from the team is not necessarily a simple linear function as the qualities of the individuals interact. For example, suppose that we wish to select a team of two and $D$ performs adequately only when paired with $C$, otherwise he is disruptive. We can model this by setting $\util(\{A, D\}) = \util(\{ B, D\}) = 0 $ and $\util(\{C, D\}) = 1$, where the function $\util$ denotes expected utility. Now, suppose that A works well together with B and adequately with C, i.e.\ $\util(\{A, B\}) = 2$ and $\util(\{A, C\}) = 1$. Let in all other cases the utility be zero. 
\end{example} 
The Shapley value for the utility function defined in Example~\ref{example:example_2} can be seen to be $\shapley(U)= ({}^1\!/\!{}_6, 0, 0, -{}^1\!/\!{}_6)$. This would suggest that a fair, merit-based selection should choose A with highest probability, B and C equally likely, and D with lowest probability. However, can we consider this fair?
Clearly, in Example~\ref{example:example_2}, a utility maximising DM will \emph{always} prefer to select A rather than D, as selecting A is guaranteed to yield at least as high utility as selecting D. 
This raises an important question:
Should the contribution to utility of an individual when paired with D be as strongly weighted as the contribution of said individual when paired with A?


We argue that since the actual contribution of an individual to utility depends on who else has been selected, \textit{the contribution of individuals should depend on the DM's selection policy.}
If the DM is maximising expected utility, then more useful sets have a higher probability of selection and individual contributions should be weighted more heavily in relation to those sets.  
Hence, unlike static measures of merit such as the Shapley value, we will view the utility-based merit of individuals, i.e. the potential contribution to utility, as a policy-dependent quantity.

\paragraph{Contributions.} This paper investigates the problem of selecting a set of individuals out of a candidate pool as an utility maximisation problem. In this setting, we define an individual's contribution to the utility as a dynamic measure of merit. We do this by  introducing the notion of {\em Expected Marginal Contribution} (EMC), modelling the potential contribution of an individual to the utility under a given policy. We then propose a  definition of {\em meritocracy} derived from and suited for arbitrary utility functions over sets and analyse its links to the EMC of individuals. We show that when the DM's policy is egalitarian, i.e.\ confers equal selection probability to every individual, the EMC is identical to the Shapley value. A natural way to move an egalitarian policy towards meritocracy is to increase the probability of selecting an individual according to their EMC. We show that this idea corresponds to a {\em policy gradient algorithm} for a specific class of policies, which are separably parameterised over the population. For this class of policies, we show that the policy gradient is a linear transformation of the EMC, and thus maximising utility also achieves meritocratic outcomes. 
While meritocracy per say does not guarantee other notions of fairness, the DMs utility function could include some fairness measure. This, however, decouples the EMC from the true merit of an individual. Another option is to add constraints to enforce fairness notions.
For \emph{constrained policies}, e.g.\ those that must select candidates through a parameterised function, or which have group fairness requirements, maximising utility does not yield meritocratic policies. Finally, we also perform experiments in a simulated college admission setting with a dataset derived from Norwegian university data. In this setup, we measure utility and meritocracy of the selection policies with and without group fairness and structural constraints. We show that while unconstrained policies maximise both, the constraints result in either reduced utility or meritocracy, which supports our theoretical results.



\section{Related Work}
\label{sec:related}



Most work on fairness in set selection has focused on policies that rank individuals according to some fixed criterion~\citep{kearns2017meritocratic, zehlike2017fa, celis2017ranking, biega2018equity, mathioudakis2019affirmative}. This approach satisfies meritocracy, since ``better''  and higher ranked individuals are preferred.
In particular, \citet{kearns2017meritocratic} consider a probabilistic ordering, generalising \citet{dwork2012fairness}'s notion of similar treatment to selection over multiple groups.
More precisely, a person $i$ in group $A$ is preferred to a person $j$ in group $B$ only if their relative percentile ranking is higher.
They extend this basic definition to different amounts of information available to the decision maker ranging from \emph{ex ante} to \emph{ex post} fairness.
\citet{singh2019policy} propose a fair ranking approach for Plackett-Luce models.
\citet{DBLP:journals/corr/abs-1801-03533} analyse a stylised parametric model of individual \emph{potential} and~\citet{celis2020interventions} consider interventions for ranking, where each individual has a latent utility they would generate if hired. \citet{emelianov2020fair} also
examine latent worth with variance depending on group (e.g.\ gender) membership.
Instead, we use a utility maximisation perspective, where meritocracy rewards individuals according to their contribution to utility, which depends on who else is selected.
This is in contrast to the above ranking methods, which implicitly assume a fundamental worth for individuals. In our setting, the contribution of each individual to the utility depends on who else is selected so that the worth of an individual depends directly on the DM's policy.

Fair set selection problems can also naturally be found in social choice, e.g.\ in participatory budgeting \citep{aziz2020participatory} and committee voting \citep{EFSS17,LaSk20}, as well as matching problems~\citep{hakimov2020experiments,manlove2013algorithmics}. However, in our setting the selected set is evaluated by a general utility function of the DM rather than by some specific aggregation of voters' preferences like in committee voting or participatory budgeting (for which adding another candidate is usually always beneficial).
Our work is more closely related to \citep{kusner2018causal}, which considered linear utilities, with the individual performance and group fairness depending on who else is selected. \citet{dwork2018group} and \citet{bairaktari2021fair} considered fairness-as-smoothness in \emph{cohort selection} for linear utilities. In contrast, we focus on a quite different question: how to define and ensure meritocratic fairness under non-linear utilities in set selection.

The EMC, to which our definition of meritocracy is strongly related, can be seen as a generalisation of the Shapley value~\citep{shapley}. The Shapley value measures the weighted average of individual (marginal) contributions over all possible sets. 
This measure, possibly most known from cooperative game theory, is often used in order to share compensations, costs or other utilities among agents according to their contribution to the entity in a game~\citep{Wint02}. 
An experimental study of reward allocations given by humans~\cite{EoLa20} found that the Shapley distribution is often not a natural choice as humans tend to weigh single-player coalitions more heavily and often violate some of the defining axioms of the Shapley value (while satisfying others). The EMC weights individuals' contributions to sets/coalitions based on a given policy's probability of selecting the set. For some policy spaces, iteratively adapting the policy in the direction of EMCs leads to a meritocratic policy. Whether a meritocratic distribution corresponds more closely to social norms than the Shapley value would be an interesting real-life study for future work.

\section{Setting and Notation} 
\label{sec:preliminaries}
\label{sec:setting}
We formulate the problem of selecting a set of individuals from a population from a general decision theoretic perspective, where the DM aims to select a subset of individuals maximising expected utility.

We consider a population of $\nPop$ candidates
$\pop = \{1, \dots, \nPop\}$.  The DM observes the features of the
population $\bx \in \mathcal{X}$ with
$\bx \triangleq (\bx_1, \ldots, \bx_N)$, makes a decision
$\act \in \Act$ about the population using a (stochastic) policy
$\pol(\act \mid \bx)$, observes an outcome $\by \in \CY$, and obtains
utility $\ut(\act, \by)$. Since the outcomes are uncertain, the
DM's goal is to maximise the \emph{expected} utility
$\util(\pol, \bx)$ given features $\bx$ and policy $\pol$. 

To make our results concrete, we focus on the case where $\Act = \{0,1\}^\nPop$ and interpret a decision $\act = (\act_1, \dots, \act_N)$ with $\act_i = 1$ and $\act_j = 0$ as selecting individual $i$ and rejecting individual $j$. We slightly overload notation and let $\pol(\act_i = 1 \mid \bx)$ denote the (marginal) probability of $i$ being selected under $\pol$ given $\bx$, i.e.\ $\pol(\act_i = 1 \mid \bx) = \sum_{\act \in \Act \, : \, \act_i = 1} \pol(\act \mid \bx)$. Similarly, $\pol(\act_i=0 \mid \bx)$ denotes the probability of $i$ being rejected under policy $\pol$. 

In our experiments (Section~\ref{sec:experiments}), $\CY$ is a product
space encoding outcomes for every individual in the population.  The
utility function is defined over sets and outcomes
$\ut : \Act \times \CY \to \Reals$, where $\ut(\act, \by)$ denotes the
utility of the selection $\act$ w.r.t.\ the outcomes $\by$.  In this
setting, the expected utility $\util(\act, \bx)$ of taking action
$\act$ given $\bx$ can be calculated by marginalising over outcomes:
\begin{align}
\util (\act , \bx) & \defn \E [\ut  \mid \act, \bx] =  \sum_{\by \in \CY} \Pr (\by \mid \act, \bx) \, \ut (\act, \by).
\label{eq:predicted_utility_set}
\end{align}
Here, $\Pr (\by \mid \act, \bx)$ is assumed to be a given predictive model used by the DM for outcome probabilities. We want to emphasise that the focus of this paper is not the fairness or bias of the predictive model $\Pr(\by \mid \act, \bx)$, but instead notions of meritocratic fairness based on the DM's utility function and selection policy.
Now, the expected utility $\util(\pol, \bx)$ of a policy $\pol$ given population $\bx$ takes the form:
\begin{align}
  \util ( \pol , \bx) & \defn \E_\pol [ \ut \mid \bx] = 
  \sum_{\act \in \Act} \pol (\act \mid \bx) \, \util (\act , \bx).
  \label{eq:policy-population-utility}
\end{align}
\textit{While $\util(\act, \bx)$ naturally induces a ranking over sets,
it does not necessarily provide a ranking over individuals} as each individual's contribution to utility may depend on the group selected alongside it.\footnote{For the majority of this paper, we state definitions and results with respect to $\util$, without loss of generality. To see this, consider deterministic outcomes.}

\paragraph{Problem formulation.}
The goal of the DM is to find a parameterised policy in the policy space
$\Pol = \cset{\pol_\vparam}{\vparam \in \Param}$ after observing a given population with features $\bx$
that maximises expected utility.
That is, we seek 
\begin{align}
  \label{eq:opt-population}
  \vparam^*(\bx) = \argmax_{\vparam \in \Param} \util (\pol_\vparam , \bx),
\end{align}
so that the chosen policy takes into account all the information $\bx$ we have about the current population.\footnote{The related problem of choosing a policy before seeing the current population is not treated in this paper.}


\ifpreprint
We distinguish two cases: In the first, the DM chooses a policy after observing the population and in the second, the policy is fixed before the population is seen.
\fi


\ifpreprint
\paragraph{Population-independent policies.}\cdcom{I guess we do not need this.} \tkbcom{Ok, so drop this paragraph and any mentioning of distributions over populations? (except in Future Work)}
In some cases, e.g.\ when there have to be fixed rules for selecting individuals from a population, the DM may need to choose a policy before seeing a particular population. In this scenario, we can assume that the DM has access to some distribution $\distr$ over possible populations, and the problem becomes maximising expected utility under this distribution:
\begin{align}
  \label{eq:opt-distribution}
  \vparam^*(\distr) = \argmax_{\vparam \in \Param} \int_{\mathcal{X}}  \util (\pol_\vparam , \bx)  d\distr(\bx).
\end{align}
Another advantage of such policies in terms of fairness is that, under some specific policy structures, each individual can be judged without taking into account who else is in the current population. 
Instead, individuals can be judged relative to their merit 
averaged over all possible populations.
\fi


\section{Expected Marginal Contributions and Meritocracy}\label{sec:EMC and Meritocracy}
Here we consider the notion of expected marginal contribution of individuals for a specific policy and expected utility $U$. In the following, we will omit $\bx$ for brevity, since the expected utility is always conditional on $\bx$. In addition, the outcomes $\by$ and their distribution play no role in the following development.

Our definition of meritocracy is derived from the DM's utility and is
based on two stability axioms. We show that these stability notions  have meaningful links to
the expected marginal contribution of individuals. For formal proofs of the statements in this section, we refer to \cref{app:proofs from sec 4}.

\subsection{The Expected Marginal Contribution}
\label{sec:exp_marg_contr}

In the following, we will use $\util (\act) \equiv \util(\act, \bx)$ to denote the expected utility of selecting $\act$, and $\util (\act + i)$ of adding the individual $i$ to the selection, i.e. $\act + i = (\act_1, \dots, \act_{i-1}, 1, \act_{i+1}, \dots , \act_N)$. Their difference, $\util (\act + i) - \util (\act)$ can be seen as the marginal contribution of the individual to the set $\act$.

We generalise this to the {\em Expected Marginal Contribution} (EMC) of an individual $i$ under a policy $\pol$:
\begin{align}\label{eq:expected marginal contribution}
    \emc_i (\util, \pol) \defn \sum_{\act \in \Act} \pi(\act) \big[ \util(\act + i) - \util(\act) \big],
\end{align}
where the implicit dependence on $\bx$ has been dropped for brevity.

Informally, the EMC of individual $i$ corresponds to the gain (or loss) in expected utility when a policy is modified so as to always pick individual $i$.
\ifdefined \LongVersion
If we are given a distribution $\beta$ over populations, then we define the expected marginal contribution with respect to that distribution as
\begin{align*}
    \E_\beta [\EMC_j \util (\pol, \bx)] = \int_\bx \Delta_j \util (\pol, \bx) \beta(\bx).
\end{align*}
\fi
The concept of individual contributions to utility has been studied in cooperative game theory, where the celebrated Shapley value~\citep{shapley} constitutes a fair resource allocation based on marginal contributions. The Shapley value of a utility function $\util$ (also called a characteristic function) is defined as
\begin{align*}
    \shapley_i (\util) =  \frac{1}{N} \sum_{\act \in \Act \colon  \act_i = 0} \binom{N-1}{\lVert \act \rVert_1}^{-1} \big[\util(\act + i) - \util(\act)\big]. 
\end{align*}
In particular, we see that the Shapley value corresponds to the EMC for an egalitarian policy under which every individual has the same probability of being selected. 
\begin{remark}\label{lemma:shapley egalitarian}
Under the egalitarian selection policy 
$\egpol(\act) = \frac{1}{N \binom{N-1}{\lVert \act \rVert_1}}$, we have $\emc (\util, \egpol) = \shapley(U)$.
\end{remark}
\noindent
To emphasise the difference between the Shapley value and the EMC, let us revisit the set selection problem of Example~\ref{example:example_2}.

\begin{example}[Example \ref{example:example_2} continued]
Recall the scenario from Example \ref{example:example_2} with applicants Alice (A), Bob (B), Carlos (C) and David (D) as well as expected utilities $\util(\{A, B\}) = 2$, $\util(\{A, C\}) = 1$, $\util(\{C, D\}) = 1$, and $\util(S) = 0$ for all other sets $S$. 
In this situation, A has (arguably) the highest merit and D the lowest merit.  
In particular, a utility maximising DM would {always} prefer to select A over D. 
This is also expressed in the Shapley value and the EMC under the egalitarian policy $\egpol$: \  
$\shapley(\util) = \emc(\util, \egpol) = ({}^1\!/\!{}_6, 0, 0, -{}^1\!/\!{}_6)$. Now, let us account for this and consider the uniform policy that would never select D without also selecting A: 
\begin{align*}
    \pol(\act) = 
    \begin{cases} 0, \text{ if } \act_A = 0 \wedge \act_D = 1 \\ \frac{1}{12}, \text{ otherwise.}
    \end{cases}
\end{align*}
Under policy $\pol$, we see that the EMCs of $(A,B,C,D)$ 
are given by $\emc (\util, \pol) = ({}^3\!/\!{}_{12}, {}^1\!/\!{}_{12}, -{}^1\!/\!{}_{12}, -{}^2\!/\!{}_{12})$. Thus, in contrast to the Shapley value (which is oblivious to $\pol$), the EMC endows B with higher merit than C; accounting for the fact that A should always be preferred over D. We see that as the DM changes their policy towards sets with high utility, the contributions of individuals shift as well and we may have to rethink an individual's utility-based merit. 

\end{example}
The Shapley value~\citep{shapley} is characterised by four desirable axioms of fair division: symmetry, linearity, the treatment of null players, and efficiency. In fact, we see that the EMC satisfies analogous axioms, except for efficiency, which does not apply to the EMC as it lacks a corresponding normalisation. 

\begin{lemma}[Axioms of Fair Division]\label{lemma:axioms_of_fair_division}
    The EMC satisfies 
    \begin{enumerate}
    \item[1)] \textbf{Symmetry:} 
    If utility function $\util$ and individuals $i,j \in \pop$ are such that  $\util(\act + i) = \util(\act + j)$ for all $\act \in \Act$, then $\emc_i (\util, \pol) = \emc_j (\util ,\pol)$ for all  policies $\pol \in \Pol$.
    
    \item[2)] \textbf{Linearity:} 
    If $\util_1$ and $\util_2$ are two utility functions, then $\emc (\alpha \util_1+ \beta \util_2 ,\pol) = \alpha \emc (\util_1 ,\pol) + \beta \emc (\util_2, \pol) $ for all policies $\pol \in \Pol$ and $\alpha, \beta \in \Reals$.
    
    \item[3)] \textbf{Null Players:} 
    If $i \in \pop$ has zero contribution to every set, i.e.\ $\util(\act+i)=\util(\act)$ for all $\act \in \Act$, then $\emc_i (\util, \pol) = 0$ for all policies $\pol \in \Pol$.
\end{enumerate}
\end{lemma}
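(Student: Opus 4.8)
The plan is to prove all three axioms directly from the definition in \cref{eq:expected marginal contribution}, treating $\emc_i(\util,\pol) = \sum_{\act \in \Act} \pol(\act)\,[\util(\act+i)-\util(\act)]$ as a finite, $\pol$-weighted sum of per-set marginal contributions. The guiding observation is that each axiom concerns how the map $\util \mapsto \util(\act+i)-\util(\act)$ behaves, and since the EMC is just a fixed weighted combination of these maps, every relevant property of the marginal-difference operator lifts term-by-term to the EMC. Throughout I would keep in mind the convention that $\act + i$ forces the $i$-th coordinate to $1$, so that whenever $\act_i = 1$ the corresponding summand is automatically zero; this is the only notational subtlety, and it never obstructs the argument.

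For Null Players, I would simply invoke the hypothesis $\util(\act+i)=\util(\act)$ for every $\act \in \Act$: each bracketed term in the defining sum vanishes, hence $\emc_i(\util,\pol)=\sum_\act \pol(\act)\cdot 0 = 0$ for any policy $\pol$. For Linearity, I would expand $\emc_i(\alpha\util_1+\beta\util_2,\pol)$ and use that, pointwise in $\act$, $(\alpha\util_1+\beta\util_2)(\act+i)-(\alpha\util_1+\beta\util_2)(\act) = \alpha[\util_1(\act+i)-\util_1(\act)] + \beta[\util_2(\act+i)-\util_2(\act)]$; splitting the finite sum by distributivity then recovers $\alpha\,\emc_i(\util_1,\pol)+\beta\,\emc_i(\util_2,\pol)$. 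Since this holds coordinate-wise for every $i \in \pop$, it holds for the full EMC vector.

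For Symmetry, I would compute the difference of the two EMCs directly. Writing $\emc_i(\util,\pol)-\emc_j(\util,\pol) = \sum_{\act}\pol(\act)\big[(\util(\act+i)-\util(\act)) - (\util(\act+j)-\util(\act))\big] = \sum_\act \pol(\act)\,[\util(\act+i)-\util(\act+j)]$, the shared $\util(\act)$ terms cancel, and the hypothesis $\util(\act+i)=\util(\act+j)$ for all $\act$ makes every summand zero; hence $\emc_i=\emc_j$ for every policy $\pol$.

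On the matter of obstacles: there is essentially no hard step, since all three claims reduce to term-by-term manipulations of a finite sum. The only point requiring a moment's care is Symmetry, where one must use that the hypothesis is imposed over \emph{all} of $\Act$ (as stated) rather than only over sets excluding $i$ and $j$; it is precisely this uniform version of the condition that lets the $\util(\act+i)-\util(\act+j)$ differences vanish after the $\util(\act)$ terms cancel, so that no restriction on the support of $\pol$ or on the coordinates $\act_i,\act_j$ is ever needed. I would flag this explicitly so the reader sees why the weaker, Shapley-style hypothesis is not what is being assumed here.
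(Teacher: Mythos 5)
Your proposal is correct and follows essentially the same route as the paper's own proof: all three axioms are verified term-by-term from the defining sum $\emc_i(\util,\pol)=\sum_{\act}\pol(\act)[\util(\act+i)-\util(\act)]$, with linearity by distributing over the finite sum, null players by vanishing summands, and symmetry from the pointwise equality of marginal contributions (your explicit cancellation of the shared $\util(\act)$ terms is just a slightly more spelled-out version of the paper's one-line observation). Your closing remark that the hypothesis is imposed over all of $\Act$, rather than only over sets excluding $i$ and $j$, is accurate and a reasonable point to flag, though the paper does not dwell on it.
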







\subsection{Meritocracy and Stability Criteria} 
\label{sec: meritocracy}

We now define a notion of meritocracy for general utilities over candidate sets. This definition of meritocratic fairness is based on two properties of the selection policy, namely, swap stability and local stability. We will show that these properties have natural links to the EMC, which suggests that the EMC can be understood as  the additional reward individuals should obtain under a policy. 
\begin{definition}[Swap Stability]\label{definition:swap-stability}
A policy $\pol$ is called {\em swap stable} if for any two individuals $i,j \in \pop$ with $\pol(\act_i = 1) > \pol(\act_j = 1)$,  we have $\util(\pol+i-j) \ge \util(\pol-i+j)$, where $\util(\pol + i-j) \defn \sum_{\act \in \Act} \pol(\act)\util(\act+i-j)$.
\end{definition}
In other words, a policy $\pol$ is swap stable if for any two individuals $i, j \in \pop$, from which $i$ is more likely to be selected, the utility of selecting $i$ but not $j$ is higher than the utility of selecting $j$ but not $i$.
Note that instead of writing $\util(\pol+i-j)$, we can also write $\util(\pol')$, where policy $\pol'$ shifts all the mass to sets that include $i$ but not $j$, i.e.\ $\pol'(\act)= \pol(\act)+\pol(\act-i)+\pol(\act+j)+\pol(\act-i+j)$ if $a_i=1$ and $a_j=0$, and $\pol'(\act)=0$ otherwise.
We establish the following link to the EMC, where we use that $\emc_i (\util, \pol - i -j) = \sum_{\act \in \Act} \pol (\act) [\util(\act +i -j) - \util(\act - i - j)]$.


\begin{lemma}\label{lemma:swap-stability equivalence}
For $i,j \in \pop$ and policy $\pi$, $\util(\pol+i-j) \ge \util(\pol-i+j)$ if and only if $\emc_i (\util, \pol-i-j) \ge \emc_j (\util, \pol-i-j)$.
\end{lemma}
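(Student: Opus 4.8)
The plan is to prove the stronger fact that the two sides of the biconditional are governed by the \emph{same} quantity: I will show that
$\emc_i(\util, \pol - i - j) - \emc_j(\util, \pol - i - j) = \util(\pol + i - j) - \util(\pol - i + j)$,
from which the claimed equivalence follows immediately, since one gap is nonnegative exactly when the other is.

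First I would write out both expected marginal contributions relative to the modified base policy $\pol - i - j$. For $i$ this is exactly the identity supplied just before the statement, namely $\emc_i(\util, \pol - i - j) = \sum_{\act \in \Act} \pol(\act)\big[\util(\act + i - j) - \util(\act - i - j)\big]$. For $j$ I would verify the symmetric analogue $\emc_j(\util, \pol - i - j) = \sum_{\act \in \Act} \pol(\act)\big[\util(\act - i + j) - \util(\act - i - j)\big]$, by unpacking $\pol - i - j$ as the base policy that forces $a_i = a_j = 0$ and then applying the EMC definition \cref{eq:expected marginal contribution} with $j$ added back in. The point to record is that both expansions share the \emph{common baseline} $\util(\act - i - j)$, the utility of the set with both $i$ and $j$ removed.

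Next I would subtract the two expansions term by term inside the sum over $\act$. The crucial observation is that the baseline terms $\util(\act - i - j)$ are identical in both sums and therefore cancel, leaving $\sum_{\act \in \Act} \pol(\act)\big[\util(\act + i - j) - \util(\act - i + j)\big]$. By the definitions $\util(\pol + i - j) \defn \sum_{\act \in \Act} \pol(\act)\util(\act + i - j)$ and likewise $\util(\pol - i + j) \defn \sum_{\act \in \Act} \pol(\act)\util(\act - i + j)$, this is precisely $\util(\pol + i - j) - \util(\pol - i + j)$, establishing the displayed equality and hence the lemma.

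The argument is essentially a one-line cancellation, so I do not anticipate a genuine obstacle. The only point demanding care is the bookkeeping of the $\pol - i - j$ notation: I must confirm that the marginal contributions of $i$ and of $j$ are both measured against the \emph{same} reference set $\act - i - j$, so that the baseline terms truly coincide and cancel. This is exactly what the pre-stated identity for $\emc_i$ encodes, and what I would check for $\emc_j$ by the symmetric application of \cref{eq:expected marginal contribution}.
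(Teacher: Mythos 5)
Your proof is correct and takes essentially the same route as the paper's: the paper first establishes $\emc_i(\util,\pol-i-j)=\util(\pol+i-j)-\util(\pol-i-j)$ and then telescopes through the common baseline $\util(\pol-i-j)$, which is precisely the cancellation you carry out. The only cosmetic difference is that you cancel the shared term $\util(\act-i-j)$ setwise inside the sum over $\act$ while the paper cancels the aggregated quantity $\util(\pol-i-j)$; the key identity $\emc_i(\util,\pol-i-j)-\emc_j(\util,\pol-i-j)=\util(\pol+i-j)-\util(\pol-i+j)$ is the same in both arguments.
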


Hence, for a swap stable policy, the merit of  $i$, measured by the EMC of $i$ when selecting neither $i$ nor $j$, is higher than that of $j$. 
Next, we define locally stable policies. Intuitively, these have the property that the DM rewards individuals as much as their self-interest allows them to. 
In other words, if an individual could contribute positively to the utility, the individual ought to be selected with higher probability.
\begin{definition}[Local Stability]\label{definition:local-change-stability}
A policy $\pol$ is \emph{locally stable} if for any $i \in \pop$, we have $\util(\pol) \ge \util(\pol+i)$, where $\util(\pol+i) \defn \sum_{\act \in \Act} \pol(\act)\util(\act+i)$.
\end{definition}

Instead of $\util(\pol+i)$, we can once again write $\util(\pol'')$, where policy $\pol''$ shifts all the mass to sets that include $i$, i.e. $\pol'(\act)= \pol(\act)+\pol(\act-i)$ if $a_i=1$, and $\pol''(\act)=0$ otherwise.
Thus, stability under local changes guarantees that the expected utility of selecting $i$ is lower than the expected utility of $\pol$.
In fact, we see that this is equivalent to $\emc_i (\util,\pol) \leq 0$.
\begin{lemma}\label{lemma:local-change-stability equivalence}
For $i \in \pop$ and policy $\pi \in \Pol$ we have $\util(\pol) \ge \util(\pol+i)$ if and only if \, $\emc_i (\util,\pol) \leq 0$. 
\end{lemma}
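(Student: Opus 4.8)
The plan is to observe that the inequality in the statement is, after a trivial rearrangement, exactly the sign condition on the EMC, so the equivalence follows from a direct term-by-term comparison of the two weighted sums with no further work.

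First I would write out both expected utilities explicitly from their definitions. By \cref{eq:policy-population-utility} and the definition of $\util(\pol+i)$ accompanying \cref{definition:local-change-stability}, we have $\util(\pol) = \sum_{\act \in \Act} \pol(\act)\, \util(\act)$ and $\util(\pol+i) = \sum_{\act \in \Act} \pol(\act)\, \util(\act+i)$. The crucial point is that these are sums over the \emph{same} index set $\Act$ with \emph{identical} weights $\pol(\act)$, so I can subtract them termwise without any re-indexing:
\[
\util(\pol+i) - \util(\pol) = \sum_{\act \in \Act} \pol(\act)\,\big[\util(\act+i) - \util(\act)\big].
\]

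Next I would recognise the right-hand side as precisely $\emc_i(\util,\pol)$ from its definition in \cref{eq:expected marginal contribution}. This yields the identity $\util(\pol+i) - \util(\pol) = \emc_i(\util,\pol)$, from which the claimed equivalence $\util(\pol) \ge \util(\pol+i) \iff \emc_i(\util,\pol) \le 0$ is immediate by moving $\util(\pol)$ to the other side of the inequality.

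There is essentially no obstacle here; the only thing worth verifying is that the two sums genuinely share their weights and index set, so that terms with $\act_i = 1$ already selected (for which $\act+i = \act$ and the bracket vanishes) cause no mismatch. Since such terms contribute zero to both the EMC and the difference, the identity holds verbatim. This is the same one-line mechanism that underlies \cref{lemma:swap-stability equivalence}, now applied to a single individual rather than a swapped pair.
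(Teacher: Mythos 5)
Your proof is correct and takes essentially the same route as the paper, which likewise observes that by definition $\emc_i(\util,\pol) = \sum_{\act \in \Act} \pol(\act)\,[\util(\act+i) - \util(\act)] = \util(\pol+i) - \util(\pol)$, from which the equivalence is immediate. Your extra remark about terms with $\act_i = 1$ contributing zero is a fine sanity check but adds nothing beyond the paper's one-line argument.
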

\noindent
We now define a meritocratic policy as the one satisfying both stability properties.
\begin{definition}[Utility-Based Meritocracy]
A policy $\pol$ is called \emph{meritocratic} if it is swap stable and locally stable.
\end{definition}
This notion of meritocracy ensures that our policy is fair to individuals in the sense that individuals with lower chances of being selected have no justification of increasing their chances or swapping their chances of being selected with another individual under a meritocratic policy.

\begin{example}[Example \ref{example:example_2} continued]
Once again, let us recall Example \ref{example:example_2} with applicants Alice (A), Bob (B), Carlos (C) and David (D) as well as expected utilities $\util(\{A, B\}) = 2$, $\util(\{A, C\}) = 1$, $\util(\{C, D\}) = 1$, and $\util(S) = 0$ for all other sets $S$. \\
The uniform policy $\pol(\act) = {}^1\!/\!{}_{16}$ for all $\act \in \Act$ is vacuously swap stable as all individuals have equal probability of selection. However, $\pol$ is not locally stable as $\util(\pol) = {}^4\!/\!{}_{16}$, whereas $\util(\pol + A) = {}^6\!/\!{}_{16}$. In contrast, the policy that selects the set $\{C, D\}$ with probability one is meritocratic. While $\{C, D\}$ is not utility maximising, the selection is locally optimal in the sense that changing only $C$ or $D$ does not yield strictly larger utility. 
In particular, we see that selecting the "globally" optimal set (in this case $\{A, B\}$) with probability one is always meritocratic fair according to our definition. 
\end{example}

\begin{lemma}\label{lemma:determinstic_optimal_meritocractic}
Any deterministic policy $\pi$ that maximises expected utility
is meritocratic. 
\end{lemma}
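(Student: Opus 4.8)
The plan is to exploit the fact that a deterministic utility-maximising policy $\pol$ places all of its probability mass on a single set $\act^*$ that maximises $\util(\act)$ over all $\act \in \Act$. Once this observation is in hand, both stability conditions reduce to direct comparisons against the optimal set $\act^*$, and the proof splits into verifying local stability and swap stability separately.

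First I would verify local stability. Since $\pol$ is concentrated on $\act^*$, we have $\util(\pol) = \util(\act^*)$ and $\util(\pol + i) = \util(\act^* + i)$ for every $i \in \pop$. Because $\act^* + i$ is itself a feasible element of $\Act = \{0,1\}^\nPop$ and $\act^*$ maximises $\util$, it follows immediately that $\util(\act^*) \ge \util(\act^* + i)$, i.e.\ $\util(\pol) \ge \util(\pol + i)$. Invoking \cref{lemma:local-change-stability equivalence}, this is equivalent to $\emc_i(\util, \pol) \le 0$ for all $i$, so $\pol$ is locally stable.

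Next I would handle swap stability. The key structural remark is that, since $\pol$ is deterministic, each marginal $\pol(\act_i = 1)$ equals either $0$ or $1$. Hence the precondition $\pol(\act_i = 1) > \pol(\act_j = 1)$ from \cref{definition:swap-stability} can only hold when $\pol(\act_i = 1) = 1$ and $\pol(\act_j = 1) = 0$, i.e.\ $\act^*_i = 1$ and $\act^*_j = 0$. Under this membership the operation $+i-j$ leaves $\act^*$ unchanged (it sets an already-selected coordinate to $1$ and an already-rejected coordinate to $0$), so $\util(\pol + i - j) = \util(\act^*)$. On the other hand, $\util(\pol - i + j) = \util(\act^* - i + j)$, where $\act^* - i + j$ is again a feasible set. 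Optimality of $\act^*$ then gives $\util(\act^*) \ge \util(\act^* - i + j)$, which is exactly $\util(\pol + i - j) \ge \util(\pol - i + j)$. Thus $\pol$ is swap stable, and combining the two properties shows that $\pol$ is meritocratic.

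The main obstacle — in fact the only subtlety — is the swap-stability step: one must notice that determinism forces the marginal selection probabilities to be binary, so that the hypothesis of \cref{definition:swap-stability} pins down exactly which of $i, j$ belongs to $\act^*$, after which the $+i-j$ operation collapses back to $\act^*$ and optimality finishes the argument. The local-stability step is essentially immediate from the maximality of $\act^*$.
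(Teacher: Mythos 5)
Your proof is correct and follows essentially the same route as the paper's: concentrate the deterministic optimal policy on a single maximiser $\act^*$, observe that the swap-stability precondition can only bind when $\act^*_i = 1$ and $\act^*_j = 0$ (so that $\act^* + i - j = \act^*$ and optimality closes the gap), and derive local stability from $\util(\act^*) \ge \util(\act^* + i)$. The only cosmetic difference is that the paper establishes local stability for \emph{all} optimal policies, stochastic ones included (via $\emc_i(\util,\pol) \le 0$ on the support), a slightly more general fact it reuses in the discussion following Lemma~\ref{lemma:stochastic_optimal_not_meritocractic}, whereas you treat only the deterministic case, which fully suffices for this lemma.
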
 


Whereas local stability is satisfied by any optimal policy (cf.\ Proof of Lemma \ref{lemma:determinstic_optimal_meritocractic}), we find that swap stability generally does not hold for {\em stochastic} optimal policies. 

\begin{lemma}\label{lemma:stochastic_optimal_not_meritocractic}
A stochastic policy $\pol$ that maximises expected utility may not satisfy swap stability.
\end{lemma}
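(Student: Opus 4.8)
The plan is to prove the statement by constructing an explicit counterexample: a utility function over sets together with a utility-maximising \emph{stochastic} policy that fails swap stability. Since the claim is an existence statement, one well-chosen instance suffices, and the natural contrast to draw is with \cref{lemma:determinstic_optimal_meritocractic}; ideally the instance admits a deterministic optimal policy (necessarily meritocratic) alongside a stochastic optimal policy that is not, so that the failure is pinned precisely on randomisation.

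First I would isolate the obstruction. When a stochastic optimal policy randomises over two equally optimal sets $S$ and $T$, the swap operation $\pol+i-j$ tends to map one optimal set to another, preserving utility and leaving swap stability intact. To break this symmetry I would use two optimal sets with \emph{distinct} partners, $S=\{i,s\}$ and $T=\{j,t\}$ with $s\neq t$, engineered so that (a) trading $i$ for $j$ inside $S$ yields another optimal set, i.e.\ $\util\big((S\setminus\{i\})\cup\{j\}\big)$ is maximal, while (b) forcing $i$ into $T$ and $j$ out is strictly suboptimal, i.e.\ $\util\big((T\setminus\{j\})\cup\{i\}\big)$ is small. This needs four individuals. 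Concretely I would take $\pop=\{1,2,3,4\}$ with $\util(\{1,3\})=\util(\{2,3\})=\util(\{2,4\})=1$, $\util(\{1,4\})=0$, and $\util=0$ on every remaining set, so that the maximum attainable utility is $1$.

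Next I would exhibit the offending policy: let $\pol$ place probability $p$ on $\{1,3\}$ and $1-p$ on $\{2,4\}$ for some $p\in(\tfrac12,1)$. This is utility-maximising because both sets attain the maximum, so $\util(\pol)=1$. I would then read off the marginals $\pol(\act_1{=}1)=p$ and $\pol(\act_2{=}1)=1-p$, note that $\pol(\act_1{=}1)>\pol(\act_2{=}1)$, and conclude that swap stability would require $\util(\pol+1-2)\ge\util(\pol-1+2)$. A short calculation gives $\util(\pol+1-2)=p$, since $\{2,4\}+1-2=\{1,4\}$ has utility $0$, whereas $\util(\pol-1+2)=1$, since both $\{1,3\}-1+2=\{2,3\}$ and $\{2,4\}-1+2=\{2,4\}$ are optimal. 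Hence $\util(\pol+1-2)=p<1=\util(\pol-1+2)$ for every $p<1$, so swap stability fails. Equivalently, one may invoke \cref{lemma:swap-stability equivalence} and verify $\emc_1(\util,\pol-1-2)<\emc_2(\util,\pol-1-2)$.

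I expect the main obstacle to be the design step, namely choosing the utility values so that the two sides of the swap inequality decouple in the intended direction (one side pinned at the optimum, the other dragged down by a suboptimal swap target) while keeping every set at or below the maximum so that the chosen policy is genuinely optimal. Once the utility table is fixed, the rest is routine arithmetic. As a final sanity check I would confirm that no unlisted set exceeds utility $1$ and that the deterministic policy selecting $\{1,3\}$ is meritocratic (by \cref{lemma:determinstic_optimal_meritocractic}), which makes explicit that the violation is an artefact of randomisation rather than of the utility function itself.
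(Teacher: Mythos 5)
Your proposal is correct and takes essentially the same approach as the paper: both argue by a counterexample in which a stochastic optimal policy mixes over two optimal sets $\{i,s\}$ and $\{j,t\}$ such that replacing $i$ by $j$ in the first preserves optimality ($\util(\act-i+j)=\util(\act)$) while forcing $i$ in and $j$ out of the second is strictly suboptimal ($\util(\bct+i-j)<\util(\bct)$). The only difference is that the paper states these conditions abstractly with mixing weights $\nicefrac{2}{3},\nicefrac{1}{3}$, whereas you instantiate them with an explicit four-individual utility table and the family of weights $p\in(\nicefrac{1}{2},1)$ --- a concreteness that, if anything, makes the argument more self-contained.
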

 
The proof of Lemma \ref{lemma:stochastic_optimal_not_meritocractic}, which can be found in Appendix \ref{app:proofs from sec 4}, relies on a counterexample. The counterexample suggests that members of utility maximising sets (or optimal sets) can have different merit by being more (or less) compatible across these optimal sets. This suggests that for the swap stability to hold the selection policy has to satisfy one of the following conditions. First, the DM needs to select optimal sets in a manner that gives each optimal set member the same selection probability, thereby avoiding comparison all together. Alternatively, the DM needs to select individuals with similar contributions across {\em all} the optimal sets similarly often. 

The latter condition leads to the concept of individually {\em smooth} policies in the sense that similar individuals (in terms of compatibility across optimal sets) are being selected similarly often. This might be another desirable property of meritocratic fair decisions as it would favour individuals that do well in a variety of optimal groups.
Note that in general stochastic optimal policies could be preferred over deterministic ones, since from a group-level perspective, it appears unfair to select any single set with probability one in the presence of other sets with equal utility. More generally, the concept of smoothness could also be applied to the selection of suboptimal individuals, which would result in smooth decisions over the whole population (and not only optimal individuals). 

\section{Policy structures and optimisation}
\label{sec:util-maxim-marg}
Here we consider how different constraints on the policy structure affect the utility and meritocracy of optimal policies. Section~\ref{sec:separable} shows that when the DM is can decide to accept or reject each individual separately, maximising utility is meritocratic. However, frequently the DM is constrained to apply a uniform criterion for acceptance or rejection, such as a threshold, as we discuss in Section~\ref{sec:threshold}. However, such policies 

In particular, Section~\ref{sec:separable} shows that the policy gradient $\nabla_\vparam \util(\pol_\vparam, \bx)$ is a linear transformation of the EMC for specific separably parameterised policies. This demonstrates a connection between meritocracy and utility maximisation via the EMC of individuals by the results of the previous section.
Threshold policies, in contrast, may not always enjoy this property. In either case, 
optimal policies can be found through a policy gradient algorithm (\cref{sec:policy gradient}).
For brevity, we again omit the feature vector $\bx$ here and remind ourselves that all policies $\pol(\act \mid \bx)$ as well as expected utilities $\util(\pol, \bx)$ and $\util(\act, \bx)$ are conditional on $\bx$.


\subsection{Separable Policies}\label{sec:separable}
We say that a parameterised policy $\pol_\vparam$ is {\em separable over the population} $\mathcal{N}$ if $\vparam = (\param_1, \dots, \param_N)$ and $\pol_\vparam(\act) = \prod_{i=1}^N \pol_{\vparam}(\act_i)$, where the probability of selecting individual $i$ takes form
\begin{align}\label{eq:definition separable}
    \pol_{\vparam} (\act_i) = \frac{g(\act_i, \param_i)}{Z(\vparam)}
\end{align}
for some function $g$ and normalisation $Z(\vparam)$. 
This means that the probability of a selection $\act$ can be factorised in terms of individual parameters $\param_i$ and a common denominator depending on $\vparam$. 

\paragraph{Softmax policies.}
A natural choice for policies that take form as in \eqref{eq:definition separable} are softmax policies of the following kind. For $\beta \geq 0$ and $\vparam = (\param_1, \dots, \param_N)$, we define
\begin{align}\label{equation:softmax_policy}
    \pol_\vparam (\act) = \frac{e^{\beta  \vparam^\top \act} }{\sum_{\act^\prime \in \Act} e^{\beta \vparam^\top \act^\prime}}.
\end{align}
Here, $\beta \geq 0$ is called the inverse temperature of the distribution. 
While such policies have the advantage that the probability of selecting a set is naturally constrained, they are clearly impractical for large scale experiments as calculating the denominator is computationally heavy.
However, we get a first glance at the intimate relationship between the policy gradient and the EMC of individuals. 
\begin{lemma}\label{lemma:softmax no. 2}
The gradient of the softmax policy $\pol_\vparam$ as defined in \eqref{equation:softmax_policy} is a linear transformation of the EMC. More precisely, for every $i \in \pop$ we have
$
    \nabla_{\param_i} \util (\pol_\vparam) = \beta \pol_\vparam (\act_i = 1) \, \emc_i (\util, \pol_\vparam)
$. 
\end{lemma}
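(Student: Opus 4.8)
The plan is to compute the policy gradient directly from the softmax parameterisation and then recognise the resulting expression as a (policy-dependent) covariance, which the separability of $\pol_\vparam$ converts into the EMC. First I would differentiate $\pol_\vparam(\act) = e^{\beta\vparam^\top\act}/Z(\vparam)$ with $Z(\vparam) = \sum_{\act'\in\Act}e^{\beta\vparam^\top\act'}$. Since $\partial_{\param_i}e^{\beta\vparam^\top\act} = \beta\act_i e^{\beta\vparam^\top\act}$ and $\partial_{\param_i}Z(\vparam) = \beta\sum_{\act'}\act'_i e^{\beta\vparam^\top\act'}$, the quotient rule gives the familiar score-function identity
\begin{align*}
  \nabla_{\param_i}\pol_\vparam(\act) = \beta\,\pol_\vparam(\act)\big(\act_i - \pol_\vparam(\act_i = 1)\big),
\end{align*}
where I have used $\sum_{\act'}\act'_i\pol_\vparam(\act') = \pol_\vparam(\act_i=1)$. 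Multiplying by $\util(\act)$ and summing over $\act$ then yields
\begin{align*}
  \nabla_{\param_i}\util(\pol_\vparam) = \beta\Big(\textstyle\sum_{\act:\act_i=1}\pol_\vparam(\act)\util(\act) - \pol_\vparam(\act_i=1)\,\util(\pol_\vparam)\Big),
\end{align*}
which is exactly $\beta$ times the covariance between the indicator $\act_i$ and the utility $\util$ under $\pol_\vparam$.

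Next I would rewrite this covariance using conditional expectations. Writing $p_i = \pol_\vparam(\act_i=1)$, the standard identity for the covariance of a Bernoulli variable gives $\mathrm{Cov}_{\pol_\vparam}(\act_i,\util) = p_i(1-p_i)\big(\E_{\pol_\vparam}[\util\mid\act_i=1] - \E_{\pol_\vparam}[\util\mid\act_i=0]\big)$. In parallel, I would reduce the EMC: since $\util(\act+i) = \util(\act)$ whenever $\act_i=1$, the defining sum collapses to the configurations with $\act_i = 0$,
\begin{align*}
  \emc_i(\util,\pol_\vparam) = \sum_{\act:\act_i=0}\pol_\vparam(\act)\big(\util(\act+i) - \util(\act)\big).
\end{align*}

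The crux is to connect these two conditional-expectation differences, and this is where separability is indispensable. Because the softmax policy factorises as $\pol_\vparam(\act) = \prod_j \pol_\vparam(\act_j)$ with $\pol_\vparam(\act_j=1) = e^{\beta\param_j}/(1+e^{\beta\param_j})$, the coordinate $\act_i$ is independent of the remaining coordinates $\act_{-i}$; hence the conditional law of $\act_{-i}$ is the same given $\act_i=1$ and given $\act_i=0$. Substituting $\pol_\vparam(\act) = (1-p_i)\prod_{j\ne i}\pol_\vparam(\act_j)$ for configurations with $\act_i=0$ into the reduced EMC and using this independence, both $\sum_{\act:\act_i=0}\pol_\vparam(\act)\util(\act+i)$ and $\sum_{\act:\act_i=0}\pol_\vparam(\act)\util(\act)$ become $(1-p_i)$ times the corresponding conditional expectations, so that $\emc_i(\util,\pol_\vparam) = (1-p_i)\big(\E_{\pol_\vparam}[\util\mid\act_i=1] - \E_{\pol_\vparam}[\util\mid\act_i=0]\big)$. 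Comparing with the covariance expression shows $\mathrm{Cov}_{\pol_\vparam}(\act_i,\util) = p_i\,\emc_i(\util,\pol_\vparam)$, and therefore $\nabla_{\param_i}\util(\pol_\vparam) = \beta\,p_i\,\emc_i(\util,\pol_\vparam) = \beta\,\pol_\vparam(\act_i=1)\,\emc_i(\util,\pol_\vparam)$, as claimed.

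I expect the main obstacle to be precisely this separability step: one must argue that conditioning on $\act_i$ leaves the distribution of the other coordinates unchanged, so that the marginal-contribution sum defining the EMC lines up term-by-term with the difference of conditional expectations appearing in the covariance. The gradient computation and the Bernoulli-covariance identity are routine; the real content of the lemma is that the product structure of the softmax is exactly what makes these two quantities proportional, with constant $\beta\,\pol_\vparam(\act_i=1)$.
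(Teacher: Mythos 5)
Your proof is correct, and it diverges from the paper's in the second half. The opening is the same in both: the score-function identity $\nabla_{\param_i}\pol_\vparam(\act) = \beta\,\pol_\vparam(\act)\bigl(\act_i - \pol_\vparam(\act_i=1)\bigr)$, giving $\nabla_{\param_i}\util(\pol_\vparam) = \beta\bigl(\sum_{\act\colon \act_i=1}\pol_\vparam(\act)\util(\act) - \pol_\vparam(\act_i=1)\util(\pol_\vparam)\bigr)$. From there the paper never mentions covariances or conditioning: it reindexes the sum over $\{\act\colon \act_i=1\}$ via the multiplicative shift $\pol_\vparam(\act+i) = e^{\beta\param_i}\pol_\vparam(\act)$ for $\act_i=0$, then absorbs the factor using the odds identity $\pol_\vparam(\act_i=0)\,e^{\beta\param_i} = \pol_\vparam(\act_i=1)$, landing directly on $\pol_\vparam(\act_i=1)\sum_{\act\colon\act_i=0}\pol_\vparam(\act)[\util(\act+i)-\util(\act)]$. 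You instead recognise the bracket as $\mathrm{Cov}_{\pol_\vparam}(\act_i,\util)$, apply the Bernoulli covariance formula, and use coordinate independence of the softmax to identify $\emc_i(\util,\pol_\vparam) = (1-p_i)\bigl(\E_{\pol_\vparam}[\util\mid\act_i=1]-\E_{\pol_\vparam}[\util\mid\act_i=0]\bigr)$; both routes exploit the same product structure, just packaged differently (your "independence of $\act_{-i}$ from $\act_i$" is exactly what the paper's shift and odds identities encode). What your route buys is modularity: the identity $\mathrm{Cov}_{\pol}(\act_i,\util) = \pol(\act_i=1)\,\emc_i(\util,\pol)$ holds for \emph{any} separable (product) policy regardless of parameterisation, which makes the factor $\beta\,\pol_\vparam(\act_i=1)$ transparent and immediately recovers Lemma \ref{lemma:multinomial gradient} as well, since for separable linear policies the gradient is $\E[\util\mid\act_i=1]-\E[\util\mid\act_i=0] = \emc_i(\util,\pol_\vparam)/\pol_{\param_i}(\act_i=0)$. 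What the paper's route buys is self-containment: a single chain of algebraic equalities with no appeal to conditional distributions, at the cost of obscuring why the proportionality constant takes this form. One small caution: your phrasing that separability is "indispensable" is right in spirit, but note the paper's proof uses it implicitly rather than as a named hypothesis --- it is a consequence of the linear energy $\beta\vparam^\top\act$, not an extra assumption.
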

Note that the EMC of individual $i$ under policy $\pol$ constitutes the expected gain (or loss) in expected utility of always adding individual $i$ under $\pol$. Consequently, if individual $i$ is being selected with probability $1$, then $\emc_i (\util, \pol_\vparam)$ is zero. More generally, the EMC of an individual $i$ is decreasing as the probability of selecting individual $i$ is increasing. For this reason, we can understand the factor $\pol_\vparam (\act_i = 1)$ in Lemma \ref{lemma:softmax no. 2} as a normalisation of the EMC opposing this effect, as we would expect the policy gradient to be agnostic about the current share of individual $i$.

\begin{figure*}[t!]
\centering
\begin{subfigure}{.33\textwidth}
  \centering
  \includegraphics[width=\textwidth,trim={3cm 8cm 4cm 9cm},clip]{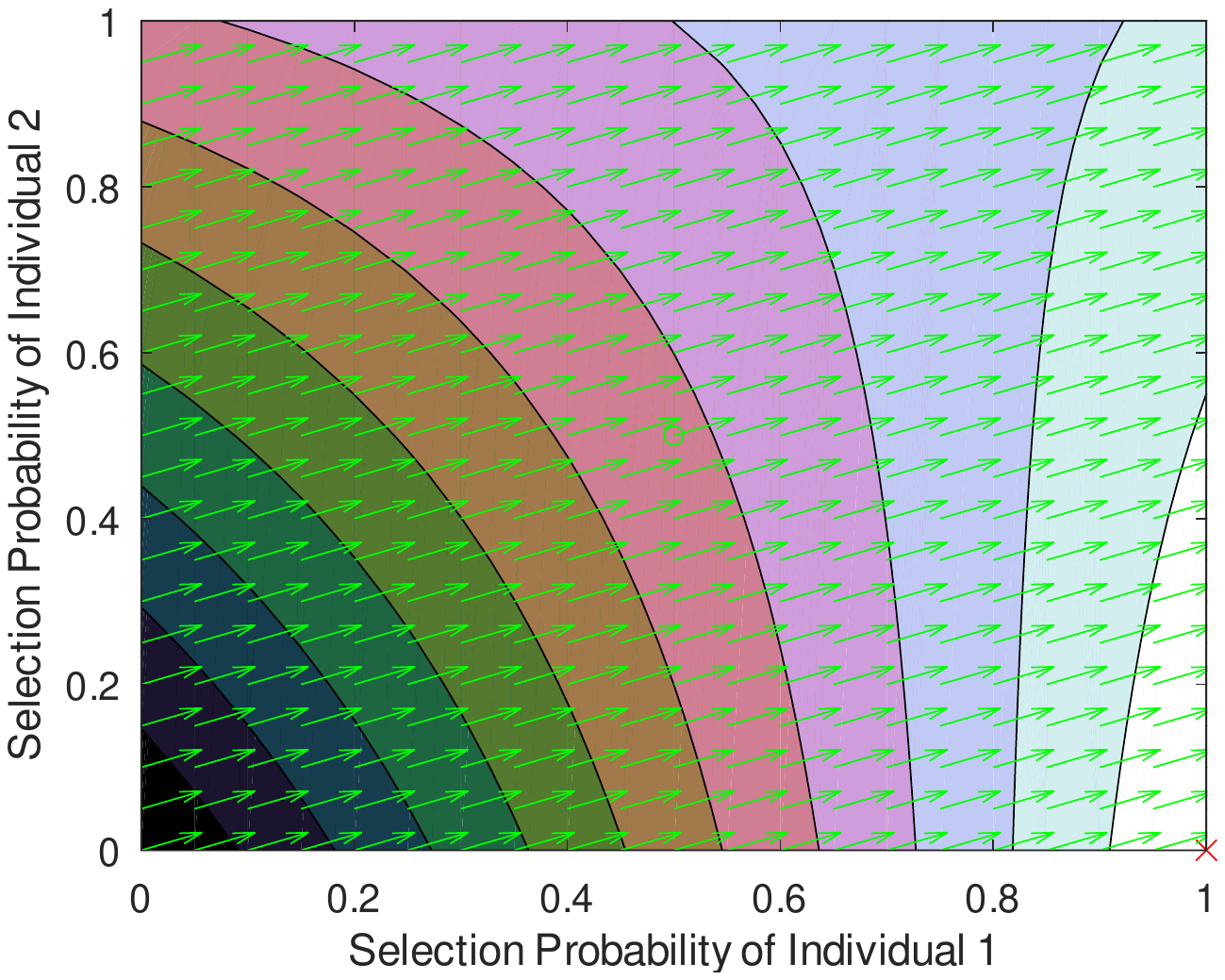}
  \caption{Shapley value}
  \label{fig:utility_landscape_Shapley}
\end{subfigure}
\begin{subfigure}{.33\textwidth}
  \centering
  \includegraphics[width=\textwidth,trim={3cm 8cm 4cm 9cm},clip]{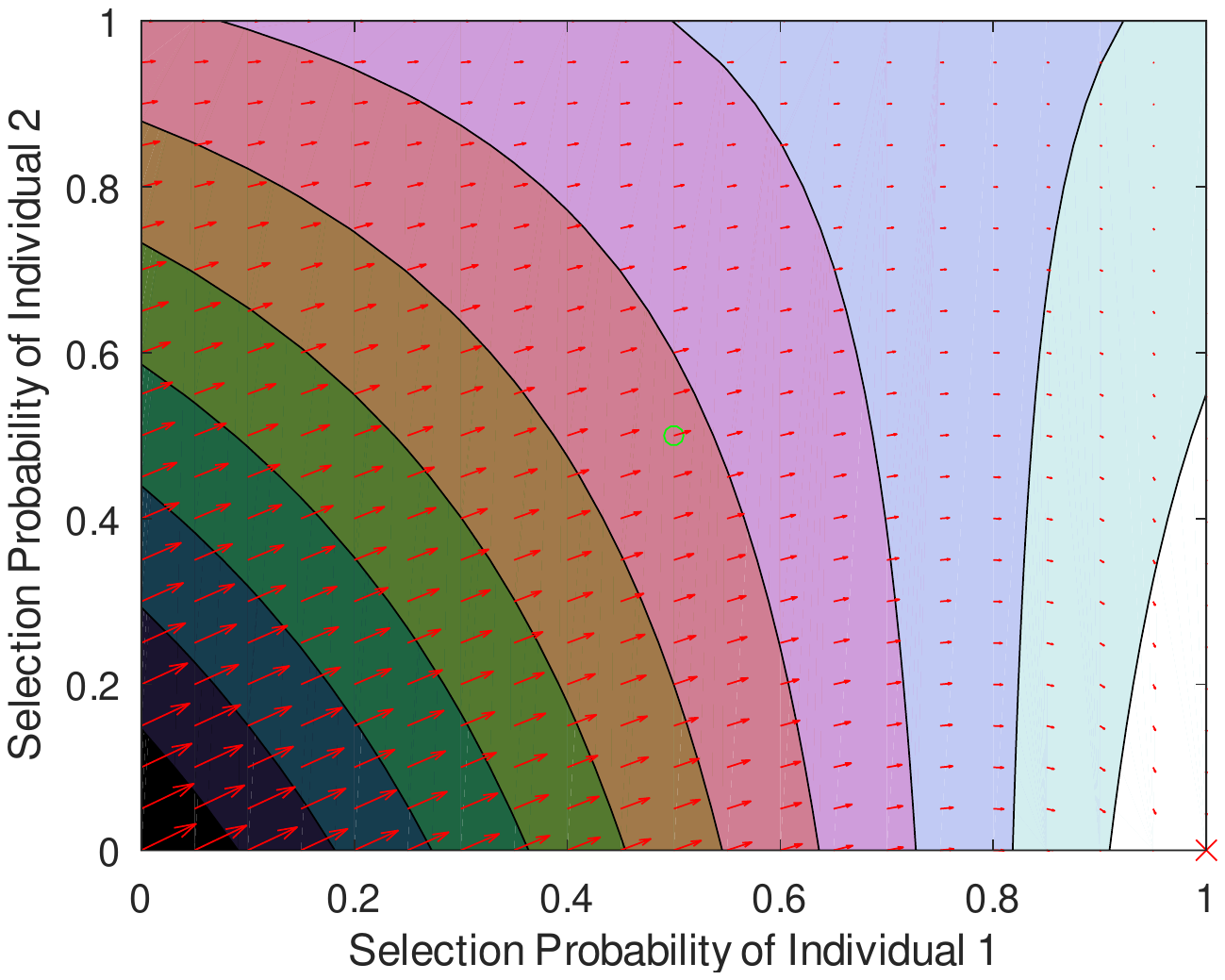}
  \caption{EMC}
  \label{fig:utility_landscape_EMC}
\end{subfigure}
\begin{subfigure}{.33\textwidth}
  \centering
  \includegraphics[width=\textwidth,trim={3cm 8cm 4cm 9cm},clip]{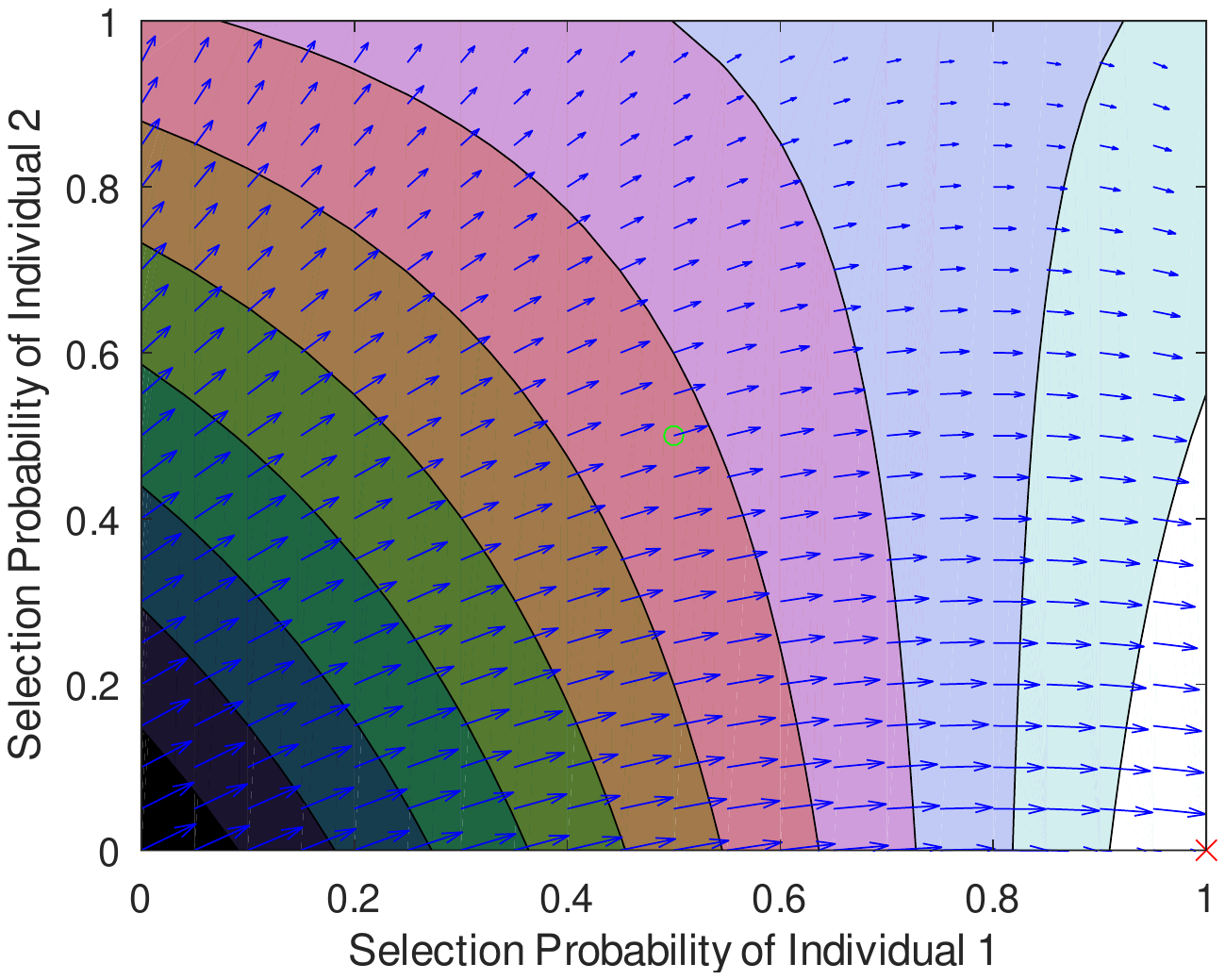}
  \caption{Policy Gradient}
  \label{fig:utility_landscape_gradient}
\end{subfigure}%
\caption{Shapley value, EMC and policy gradient for two individuals, a separable linear policy, a log-linear utility, and a selection cost of $c=0.3$ (cf.\ \cref{sec:experiments}). Light colours denote higher utility, dark colours lower utility. The maximum is indicated by a red cross and the egalitarian policy $(0.5, 0.5)$ by a green circle. We see that at $(0.5, 0.5)$, the Shapley value, EMC and policy gradient point in the same direction. While the Shapley value points to the same direction everywhere, the EMC changes directions depending on the current policy and so does the policy gradient. In particular, the EMC tends to zero as the probability of selecting both individuals tends to one. As we will see in Lemma \ref{lemma:softmax no. 2} and Lemma \ref{lemma:multinomial gradient} the policy gradient is closely related to the EMC. In particular, across the diagonal $y=x$ both point in the same direction.}
\label{fig:utility_landscape}
\end{figure*}




\paragraph{Linear policies.} 
From a computational point of view, it is appealing to consider separable policies so that the probability of selecting individual $i$ truly depends on the parameter $\param_i$ only, i.e.\ $Z(\vparam) = const$ in equation \eqref{eq:definition separable}. Then, the probability of selecting or rejecting individual $i$ is given by $\pol_\vparam(\act_i) = \pol_{\param_i} (\act_i)$. We want to emphasise that such policy structures do not render decisions about individual $i$ independent from those about individual $j$. For instance, the expected gain (or loss) $\util(\pol + i)$ from adding individual $i$ might be small or large depending on whether individual $j$ is likely to be included under $\pol$. 
We now introduce separable linear policies as these have a particularly intuitive structure. Separable linear policies select individual $i$ with probability $\param_i$, i.e.\ 
\begin{align*}
    \pol_{\param_i} (\act_i) = \param_i \ind{\act_i = 1} + (1- \param_i ) \ind{\act_i = 0}.
\end{align*}
A decision $\act \in \Act$ is then assigned the probability $\pol_\vparam (\act) = \prod_{i = 1}^N \pol_{\param_i}(\act_i)$.
We again find that there is a natural link between the policy gradient and the EMC of individuals.

\begin{lemma}\label{lemma:multinomial gradient}
For separable linear policies, if $\pol_{\param_i} (\act_i = 0 ) > 0$, then 
$\nabla_{\param_i} \util (\pol_\vparam) = \frac{\emc_i (\util ,\pol_\vparam)}{\pol_{\param_i} (\act_i = 0)}$ for every $i \in \pop$. 
\end{lemma}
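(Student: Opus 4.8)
The plan is to compute the gradient $\nabla_{\param_i}\util(\pol_\vparam)$ directly from the definition of expected utility and then match it term-by-term against the EMC. First I would expand $\util(\pol_\vparam) = \sum_{\act \in \Act} \pol_\vparam(\act)\,\util(\act)$ and insert the separable form $\pol_\vparam(\act) = \prod_{k=1}^N \pol_{\param_k}(\act_k)$. Differentiating with respect to $\param_i$ only touches the single factor $\pol_{\param_i}(\act_i) = \param_i \ind{\act_i=1} + (1-\param_i)\ind{\act_i=0}$, whose derivative is $\ind{\act_i=1} - \ind{\act_i=0}$. This yields
\[
\nabla_{\param_i}\util(\pol_\vparam) = \sum_{\act \in \Act}\Big(\prod_{k\neq i}\pol_{\param_k}(\act_k)\Big)\big(\ind{\act_i=1}-\ind{\act_i=0}\big)\,\util(\act).
\]

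Next I would collapse the sum over $\act$ into a sum over the sub-selections of everyone but $i$, pairing each such sub-selection with its two completions $\act_i=0$ and $\act_i=1$ (which share the same factor $\prod_{k\neq i}\pol_{\param_k}(\act_k)$). The indicator difference then combines the two utility terms into a single marginal contribution $\util(\act+i)-\util(\act)$ evaluated at the completion with $\act_i=0$, giving
\[
\nabla_{\param_i}\util(\pol_\vparam)=\sum_{\act \in \Act \,:\, \act_i = 0}\Big(\prod_{k\neq i}\pol_{\param_k}(\act_k)\Big)\big[\util(\act+i)-\util(\act)\big].
\]

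In parallel I would rewrite the EMC. Since the bracket $\util(\act+i)-\util(\act)$ vanishes whenever $\act_i=1$, only terms with $\act_i=0$ survive in $\emc_i(\util,\pol_\vparam) = \sum_{\act \in \Act}\pol_\vparam(\act)\big[\util(\act+i)-\util(\act)\big]$. For such $\act$ the policy weight factors as $\pol_\vparam(\act) = \pol_{\param_i}(\act_i=0)\prod_{k\neq i}\pol_{\param_k}(\act_k)$, so the common factor $\pol_{\param_i}(\act_i=0)=1-\param_i$ pulls out of the sum. Comparing with the previous display gives $\emc_i(\util,\pol_\vparam) = \pol_{\param_i}(\act_i=0)\,\nabla_{\param_i}\util(\pol_\vparam)$, and dividing by $\pol_{\param_i}(\act_i=0)>0$ yields the claim.

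The computation is essentially mechanical; the only point requiring care is the bookkeeping in reindexing the action sum over the completions of $i$ and recognising that the indicator difference $\ind{\act_i=1}-\ind{\act_i=0}$ fuses the two completions into exactly one marginal-contribution term, matching the surviving $\act_i=0$ terms of the EMC. The positivity hypothesis $\pol_{\param_i}(\act_i=0)>0$ enters only at the final division.
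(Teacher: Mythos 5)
Your proposal is correct and follows essentially the same route as the paper's proof: differentiate the separable product so that only the linear factor $\pol_{\param_i}(\act_i)$ contributes the indicator difference $\ind{\act_i=1}-\ind{\act_i=0}$, pair the two completions of each sub-selection into a single marginal contribution over $\act_i=0$, and match against the EMC after noting that terms with $\act_i=1$ vanish and the factor $\pol_{\param_i}(\act_i=0)$ pulls out. The only cosmetic difference is that you factor $\pol_{\param_i}(\act_i=0)$ out of the EMC and compare, whereas the paper multiplies and divides by it inside the gradient computation; the two manipulations are identical in substance.
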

We observe that the gradient has a similar form as in Lemma \ref{lemma:softmax no. 2}, where in this case the ``normalising'' factor is the reciprocal of $\pol_{\param_i} (\act_i = 0)$. 
In particular, we see that for separable linear policies as well as separable softmax policies the policy gradient is equal to the (uniformly scaled) EMC when evaluated at any egalitarian policy that gives the same selection probability to all individuals. 
The relation of the Shapley value, EMC and policy gradient is also exemplarily illustrated in \cref{fig:utility_landscape}. 

\subsection{Threshold Policies}
\label{sec:threshold}
While separable policies can make arbitrary decisions, which may be undesirable from the point of view of fairness, we here consider a class of policies that apply the same decision rule to every individual. This uniformity of treatment should lead to similar outcomes for similar individuals. While this may not make sense when hiring a team of experts, it is eminently suitable for college admission settings where transparent and easily interpretable decision rules are preferable.
We define these policies by parameterising over the feature space $\CX$. One natural example is a policy of logistic type, where $\vparam = (\param_1, \dots , \param_{|\mathcal{X}|})$ and the probability of selecting individual $i$ is given by
\begin{align*}
    \pol_\vparam (\act_i = 1 \mid \bx_i) = 
    \frac{e^{\vparam^\top \bx_i}}{1+e^{\vparam^\top \bx_i}}.
\end{align*}
Such policies can be understood as threshold policies, since an individual $i$ with feature vector $\bx_i$ such that $\vparam^\top \bx_i > 0$ is being selected at least $50$\% of the time. In practice, these policies can be projected to the next closest vertex in the simplex so as to attain a deterministic threshold, namely, individual $i$ is being selected if only if $\vparam^\top \bx_i > 0$.  


\subsection{A Policy Gradient Algorithm}
\label{sec:policy gradient}
A natural way to find an optimal policy parameter $\vparam^*$ is to use policy gradient (\cref{alg:pol_opt}).
For policies that are separably parameterised over the population, this naturally shifts the DM's policy towards meritocratic fair decisions as the policy gradient is a linear transformation of the EMC.
\begin{algorithm}[H]
	\caption{Policy gradient algorithm}
	\label{alg:pol_opt}
	\begin{algorithmic}[1]
		\State \textbf{Input}: a model $\Pr(\by | \act, \bx)$, a population $\pop$ with features $\bx$ and a utility function $\ut$.
		\State \textbf{Initialise}: $\vparam^0$, $\delta > 0$, learning rate $\eta > 0$
		\While{$\| \vparam^{i+1} - \vparam^i \| > \delta$}
		\State Evaluate $\nabla_\vparam \util (\pol_\vparam , \bx) $ from $\ut$, $\bx$ and $\Pr$.
		\State $\vparam^{i+1} \leftarrow \vparam^i + \eta [\nabla_\vparam \util (\pol_\vparam , \bx) ]_{\vparam=\vparam^{i}}$
        \State $i+\!+$
		\EndWhile
		\State \textbf{return} $\pi_{\vparam^{i+1}}$
	\end{algorithmic}
\end{algorithm}


We visualise the policy gradients and associated EMCs in \cref{fig:utility_landscape}. They have similar directions except for the edges where the probability of selecting an individual is almost $1$. This illustrates the effect of the ``normalising'' factors emerging in Lemma \ref{lemma:softmax no. 2} and Lemma \ref{lemma:multinomial gradient}. We also observe that the Shapley value, the EMC, and the policy gradient are similar near the point $(0.5, 0.5)$, i.e.\ when the policy is almost egalitarian. This validates our arguments of the policy gradient reducing to the EMC and in turn the EMC reducing to the Shapley value under an egalitarian policy.

\section{Case Study: College Admissions in Norway} \label{sec:experiments}
In this section, we perform an empirical case-study on the selection of applicants from a candidate pool in a simulated college admission system based on real-world student data from Norway. Recall that deterministic selection of a utility maximising set always satisfies our notion of meritocracy (Lemma \ref{lemma:determinstic_optimal_meritocractic}). For this reason, we also consider the setting where we impose an $\varepsilon$-statistical parity constraint on the optimisation and analyse its effect on the meritocracy and utility of the decisions. 

\subsection{Data and Experimental Setup}
\paragraph{Data.} We use two datasets from the Norwegian Database for Statistics on Higher Education:
\begin{enumerate}
    \item Application data\footnote{\url{https://dbh.nsd.uib.no/dokumentasjon/tabell.action?tabellId=379}} of all applicants to all Norwegian university programs including the following details:
    birth date, semester of application, gender,  citizenship, country of educational background,
    high school grades in form of GPA and summarised language/science points, other points, admission decision, and each applicant's preference for the program. 
    \item Exam data\footnote{\url{https://dbh.nsd.uib.no/dokumentasjon/tabell.action?tabellId=472}} of all students at Norwegian universities for all their taken exams including: courses, study program, and achieved grades.
\end{enumerate}
We use data from the time period of $2017-2020$ and limit ourselves to one 5-year master's program with a large number of admitted students at one Norwegian university. In terms of the outcomes, we consider exam data for three mandatory courses in the same program.

\paragraph{Simulator}
As our setting is interactive\footnote{Specifically, the exam outcomes depend on our actions, i.e. which students have been admitted, which are fixed in the historical data.}, we cannot use a static dataset for evaluation. Instead, we use the data to construct a simulator, which generates populations and outcomes for our experiments (for more details see Appendix~\ref{app:details_of_experiments}).
We simulate applications for the considered 5-year master's program using ctgan \citep{xu2019modeling} 
with the following features: age (at application), gender, citizenship, country of educational background, high school GPA,
science points and language points (from the high school degree), other points and priority (applicant's preference for the specific program).
The part of the simulator producing exam outcomes uses linear regression trained on the exam outcomes of admitted students in the original data.

\paragraph{DM's motivation}
We assume that the DM (the faculty) is interested in having successful students across all three courses/disciplines. This could, for instance, be motivated by the wish to produce capable graduates for all necessary fields of practice, so as to uphold the department's reputation.

\paragraph{Data generation.}
We first generate records for 5,000 students, which represent the admitted students from previous years that have graduated, using the simulator described above and an admission policy described in the supplementary material.
Next, we sample again $200$ students
from the same simulator. This time, the DM does not observe the course results of the students. These students represent this year's applicants from which the DM has to choose. 

\paragraph{The DM's regression modelling.}
Since the DM's utility is tied to individual outcomes, they must estimate a model $\Pr(\by \mid \bx, \act)$ for the course results $\by$ of this year's applicants $\bx$ in order to make their selection. 
The DM's model (which is of course different from the one the simulator uses) is estimated using regression on the data from the 5,000 simulated students admitted in prior years.
 
\paragraph{Utility function and constraint.}
We assume that the DM is interested in good course results of admitted students across all three considered disciplines/courses. 
In addition, the DM has a cost $c$ associated with admitting a student. 
Then, the DM would only select a student if the gain in utility from adding the student to the selection exceeds the cost $c$ of selecting them.
For any given set of {\em outcomes} of the admitted applicants, we design a log-linear utility that values balanced course results across all areas\footnote{The potential outcomes of non-admitted applicants do not contribute to the utility}:
\begin{align}\label{eq:definition_utility_function}
    \ut(\act, \by) =  \sum_{j=1}^3 \log \left( \sum_{i \in \pop} \act_i \cdot \by_{i,j} \right) - c \cdot \| \act \|_1.
\end{align}
Since future  results $\by$ are unknown to the DM, they instead maximise expected utility by marginalising the above utility function $\ut$ over the outcomes as described in equation~\eqref{eq:predicted_utility_set}.  

In addition to the unconstrained optimisation problem, we are interested in the effect that additional constraints have on the meritocracy of the selection policy. 
Constraints could be needed in order to enable educational opportunities for a minority group. Consider the case where a minority and majority group perform well on their own but have very limited utility when members are mixed. Here we would only select members of the majority group.
We apply a typical notion of group fairness, namely, $\varepsilon$-statistical parity with respect to the feature gender. Then, a selection policy $\pol$ must satisfy
\begin{align}
  | \pol(\act_i = 1 \mid i \text{ is male}) - \pol(\act_j = 1 \mid j \text{ is female}) | \leq \varepsilon .
 \label{eq:constraint}
\end{align}

\paragraph{Algorithmic comparisons.}
We test the policy gradient approach for separable linear and threshold policies as introduced in \cref{sec:util-maxim-marg}. For the threshold policies, we use all high-school grades available (grade points, language points, science points and other points) as a numerical four dimensional feature. This policy simulates the case in which the DM provides an interpretable policy with predetermined admission criteria. 
For both policy structures, we allow the policy gradient algorithm 250 updates to converge. 
In addition, we use the uniformly random selection of sets as a trivial lower baseline. We also run a stochastic greedy algorithm (see e.g.\ \cite{mirzasoleiman2015lazier}), which is a robust algorithm for finding the utility maximising set in \emph{unconstrained} set selection problems. For the constrained optimisation problem, we use adaptive penalty terms to ensure that the policy gradient algorithm satisfies the constraint (see \cref{app:details_of_experiments}). To obtain a selection satisfying the constraints from the stochastic greedy algorithm, we set the utility of all sets violating the constraint to zero. In particular, note that the stochastic greedy algorithm is usually deployed for unconstrained utility maximisation problems only so that one can expect it to achieve comparably low utility in the constrained case.

\subsection{Measuring Deviation from Meritocracy}
We are interested in the tensions that arise between utility maximisation and meritocratic fair decisions. To this end, we aim to quantify the {\em deviation from meritocracy} in our experiments. To measure violations of swap stability as introduced in  Definition \ref{definition:swap-stability}, we suggest to use 
\begin{eqnarray*}
  &&\hspace*{-18pt}  \devswap(\pi, \bx) = \sum_{i, j \in \pop} (\pol(\act_i = 1 \mid \bx ) - \pol(\act_j = 1 \mid \bx))^+ \big( \util( \pol - i + j, \bx) \\
&&\hspace*{-18pt}\qquad - \util( \pol +i -j, \bx) \big)^+,
\end{eqnarray*}
where $(X)^+ \defn \max \{0, X\}$. Here, large values of $\devswap(\pol, \bx)$ indicate large deviations from swap stable decisions.
Note that our choice of $\devswap$ not only accounts for the number of infringements, but also the magnitude of the deviation from swap stability. For instance, if $\util (\pol +i - j) \ll \util(\pol-i +j)$ while $\pol(\act_i = 1) \gg \pol(\act_j = 1)$, the measured deviation from swap stability is accordingly large. In particular, if $\devswap (\pol, \bx ) = 0$, the policy $\pol$ is swap stable. 
To measure the deviation from local stability (Definition \ref{definition:local-change-stability}), we use the cumulative positive EMCs under policy $\pol$:
\begin{align*}
    \devchange(\pi, \bx) = \sum_{i \in \pop} \big(\emc_i (\util, \pol, \bx) \big)^+.
\end{align*}
Again, larger values of $\devchange(\pol, \bx)$ indicate more severe deviations from locally stable decisions. Recall that   Lemma \ref{lemma:local-change-stability equivalence} states that a policy $\pol$ is stable under local changes if and only if $\emc_i (\util ,\pol, \bx) \leq 0$ for all $i \in \pop$. Thus, if $\devchange(\pi, \bx) = 0$, the policy $\pol$ is stable under local changes.\footnote{Clearly, one could define deviation from meritocracy differently, e.g.\ by counting the number of violations. However, we choose to use $\devswap$ and $\devchange$ as they also give insights into the severity of the deviation.}

\subsection{Experimental Analysis}

\begin{figure*}[t!]
\centering
\begin{subfigure}{.49\textwidth}
  \centering
  \includegraphics[width=\textwidth]{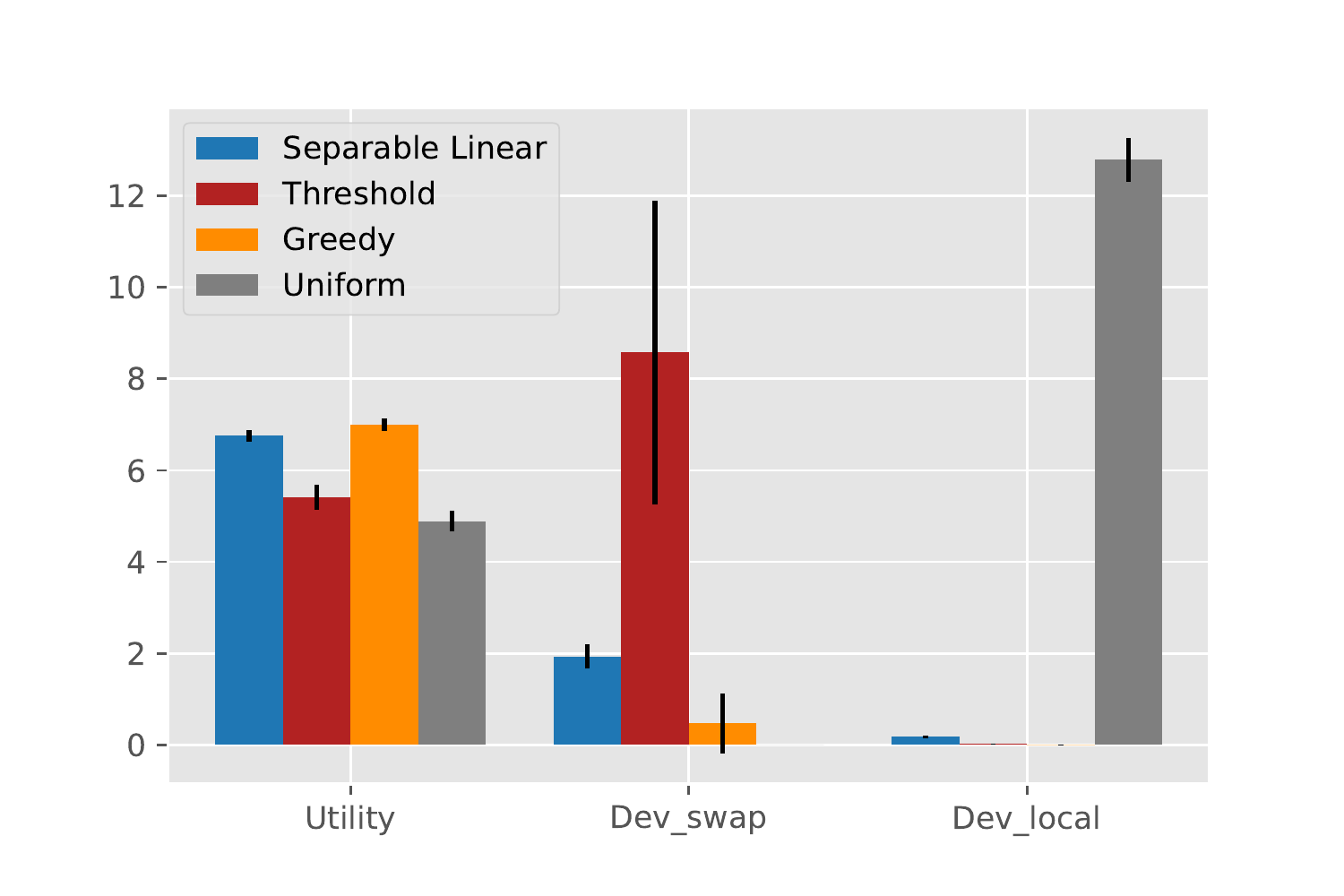}
  \caption{Unconstrained Optimisation}
  \label{fig:unconstrained_results}
\end{subfigure}
\begin{subfigure}{.49\textwidth}
  \centering
  \includegraphics[width=\textwidth]{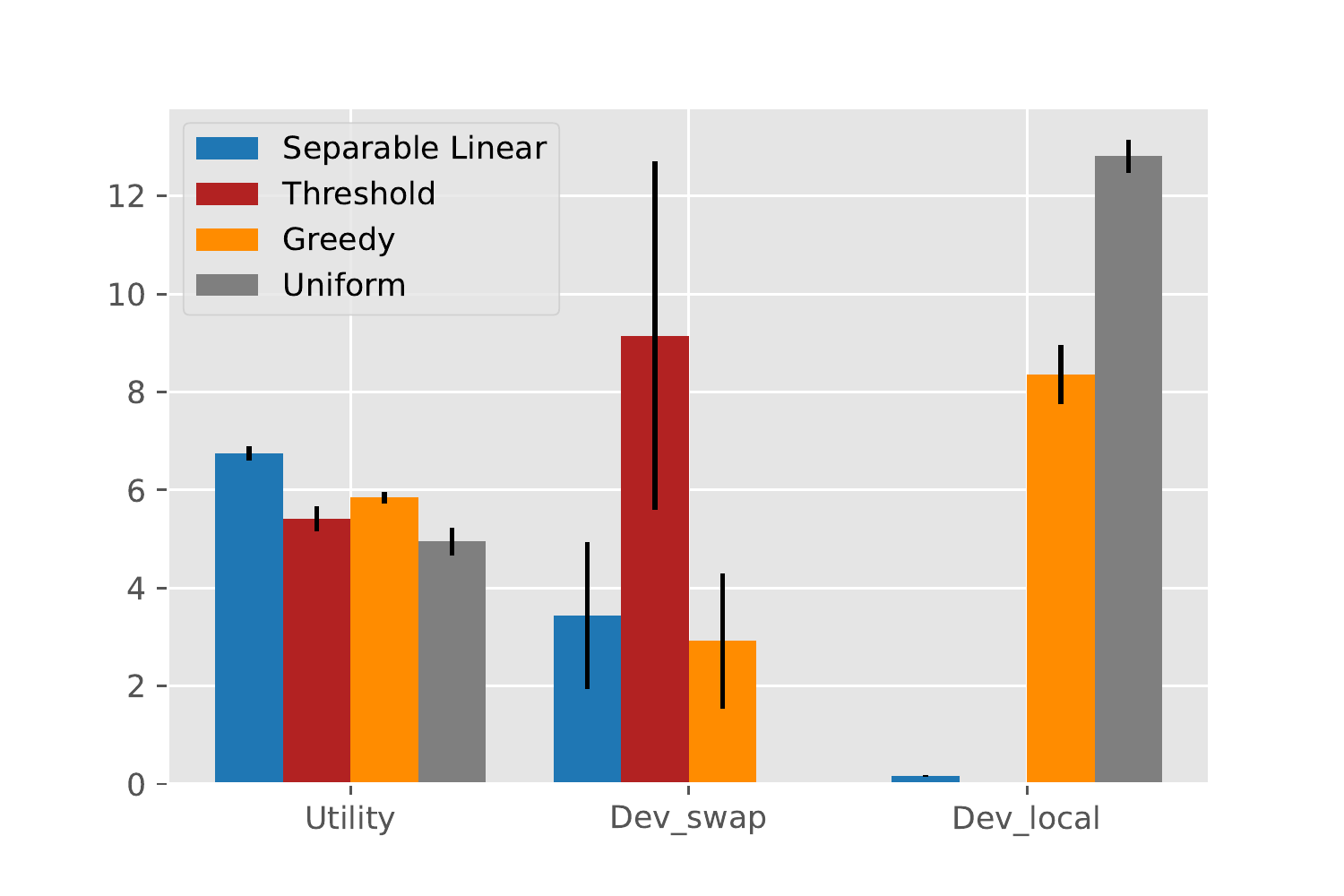}
  \caption{Optimisation under $\varepsilon$-statistical parity}
  \label{fig:constrained_results}
\end{subfigure}
\caption{Expected utility, $\devswap$ and $\devchange$ with respect to the {\em true outcomes} for log-linear utility with cost $c=0.05$ and $200$ applicants. (a): unconstrained utility maximisation for specific policy structures. (b): utility maximisation under $\varepsilon$-statistical parity constraint ($\varepsilon = 0.1$). The results are averaged over $5$ repeats (each with different simulated data). The black lines show the standard deviation.}
\label{fig:college_admissions}
\end{figure*}


\paragraph{Unconstrained optimisation.} 
When solving the pure utility maximisation problem without additional constraints, we observe that the policy gradient algorithm with separable linear policies and the stochastic greedy algorithm achieve the highest expected utility (Figure \ref{fig:unconstrained_results}). Both algorithms yield almost meritocratic fair policies as $\devswap$ and $\devchange$ show small values. In fact, in the unconstrained setting, a deterministic policy that maximises utility is always meritocratic as shown in  Lemma \ref{lemma:determinstic_optimal_meritocractic}. 
The non-zero values of $\devswap$ and $\devchange$ can be explained by the imperfect predictions of the underlying predictive model. {Moreover, the policy gradient algorithm might not have converged to a fully deterministic solution yet, resulting in slightly larger deviations from meritocracy than for the stochastic greedy algorithm, which always yields a deterministic policy.} The policy gradient algorithm with threshold policies achieves much lower expected utility than with separably linear policies, which is expected since the policy space is highly constrained. 
Whereas the threshold policy satisfies local stability, it deviates heavily from swap stability. 
The uniform set selection policy (which is egalitarian) unsurprisingly achieves the lowest expected utility and is swap stable, i.e. $\devswap=0$, as all individuals have equal probability of selection. However, the uniform policy is not locally stable due to $\devchange$ being non-zero.

\paragraph{Optimisation under 
statistical parity.} 
The additional $\varepsilon$-statistical parity constraint 
causes a drop in expected utility for all algorithmic approaches (Figure \ref{fig:constrained_results}). Most notably, we see a rise in deviation from swap stability for separable linear policies, threshold policies, and the stochastic greedy algorithm. This suggests that the statistical parity constraint is enforcing the selection of worse individuals (in terms of merit) over better ones on the basis of their group affiliations. Moreover, we observe high deviation from local stability for the stochastic greedy algorithm. This is in accordance with its notable decrease in utility in the constrained setting.\footnote{We want to emphasise again that the stochastic greedy algorithm is typically deployed for unconstrained utility maximisation settings. As the stochastic greedy algorithm performs successive selection of individuals it may stop selection prematurely in the presence of additional constraints such as statistical parity, which can prevent it from adding an individual to its selection.}  
The uniform selection policy satisfies the $\varepsilon$-statistical parity constraint and we thus observe no changes. 


In general, we observe that unconstrained utility maximisation using separable linear policies or the stochastic greedy selection, yields approximately meritocratic fair decisions. However, once additional constraints on the policy structure, e.g.\ threshold policies, or the selection probabilities, e.g.\ statistical parity, are being imposed, utility maximisation appears to clash with our notion of meritocracy.

\section{Discussion and Future Work}\label{sec:discusion and future work}

We have provided a first look into how one can define meritocracy in a general set selection scenario. 
We define expected marginal contribution (EMC) as the quantifier of an individual's contribution to DM's expected utility. We propose to measure meritocratic fairness with EMC as a dynamic representative of individual merits, in contrast to the static scoring or ranking systems.
We show that the Shapley value is a special case of the EMC, when the DM applies an egalitarian policy, and that it satisfies analogues of the Shapley `fairness' axioms, as well as  two notions of selection stability.


We also observe that any utility maximising deterministic policy satisfies meritocracy (Lemma \ref{lemma:determinstic_optimal_meritocractic}) but it might not be the case for stochastic policies (Lemma \ref{lemma:stochastic_optimal_not_meritocractic}). Specially, the counterexample developed to prove this fact indicates that this might require {\em individually smooth} policies in the sense that similar individuals (in terms of compatibility across optimal sets) are being selected similarly often. Similar types of observations are found in the fairness literature, where smoothness induces individual fairness. It would be interesting in future to explore this link between smoothness and fairness to design meritocratic fair policies.

In particular, our experiments support the connection between meritocracy, EMC and policy gradient for linearly separable policies.
A similar connection for threshold policies seems unlikely given our experimental results and the fact that the policy parameter space has a different dimension from the space of individual contributions. This is perhaps an important consideration for policy makers that create formulaic decision rules. 
In future work, it will be interesting to analyse the case where individual decisions must be made independently of the remaining applicants. 

Our experiments also show that the meritocratic fairness deteriorates as we impose group fairness constraints, such as statistical parity, on the policy. This indicates the philosophical and quantitative tension between meritocratic notions of fairness and demographic notions of fairness \citep{Binns2020}. Meritocratic fairness cares about the individuals and tries to influence the selection policy in terms of the worth (or merit) of individuals. In contrast, demographic fairness cares about different subpopulations in the candidate pool and tries to influence the policy in order to select from these subpopulations equally or proportionally.  
Thus, under some circumstances meritocratic and demographic notions of fairness may be conflicting. 
Our experimental observations resonate this and our approach of formulating statistical parity as a constraint on the policy and EMC as a measure of merit allows us to quantify this tension. 
In future, it would be interesting to examine general conditions on the policy constraints under which demographic notions of fairness align with meritocracy.

Finally, another interesting line of work would be to extend this framework to matching problems.

\balance
\bibliographystyle{ACM-Reference-Format}
\bibliography{ref}




\newpage

\appendix

\allowdisplaybreaks

\section{Proofs}

\subsection{Proofs from \cref{sec:EMC and Meritocracy}}\label{app:proofs from sec 4}
\begin{proof}[Proof of Lemma \ref{lemma:axioms_of_fair_division}]
Symmetry: If $\util(\act+i) = \util(\act+j)$ for all $\act \in \Act$, then the marginal contributions of $i$ and $j$ for all $\act$ must be equal. This implies that $\emc_i(\util, \pol) = \emc_j (\util, \pol)$ for any policy $\pol \in \Pol$. 

Linearity: Using that $(\alpha \util_1 + \beta \util_2)(\act) = \alpha \util_1(\act) + \beta \util_2(\act)$, we get
\begin{eqnarray*}
 &&  \emc_i (\alpha \util_1 + \beta \util_2, \pol)   = \sum_{\act \in \Act} \pol(\act) [ (\alpha \util_1 + \beta \util_2)(\act +i)\\
&&\quad - (\alpha \util_1 + \beta \util_2)(\act )] \\  
&&= \sum_{\act \in \Act} \pol(\act) [ \alpha \util_1(\act+i) - \alpha \util_1(\act) + \beta \util_2(\act +i) - \beta \util_2(\act)] \\
&&= \alpha \emc_i (\util_1, \pol) + \beta \emc_i (\util_2, \pol).  
\end{eqnarray*}

Null Player Property: If $\util(\act + i) = \util(\act)$ for all $\act \in \Act$, then the marginal contributions of individual $i$ are zero for all $\act \in \Act$. Thus, the expected contributions under any policy $\pol \in \Pol$ must be zero as well, i.e.\ $\emc_i(\util, \pol) = 0$. 
\end{proof}

\begin{proof}[Proof of Lemma \ref{lemma:swap-stability equivalence}]
First, recall that $\util(\pol -i - j) = \util(\pol')$, where $\pol' (\act) = \pol(\act) + \pol (\act + i) + \pol(\act + j) + \pol(\act + i +j)$ for $a_i = a_j = 0$, otherwise $\pol(\act) = 0$. Thus, 
\begin{eqnarray*}
&&    \emc_i (\util, \pol - i -j) \\ 
    && = \sum_{\substack{a_i = 0 \\ a_j = 0}} \pol'(\act) [\util(\act + i) - \util(\act)] \\
    && = \sum_{\substack{a_i = 0 \\ a_j = 0}} \big( \pol(\act) + \pol(\act + i) + \pol(\act + j) \\
&&\qquad + \pol(\act + i + j)\big) \util(\act + i) - \util(\pol - i -j) \\
    && =\sum_{\substack{a_i = 1 \\ a_j = 0}} \big( \pol(\act-i) + \pol(\act) + \pol(\act - i + j)\\
&&\qquad + \pol(\act + j)\big) \util(\act) - \util(\pol - i - j) \\
    && = \util(\pol +i -j ) - \util(\pol - i -j)\\
&& = \sum_{\act \in \Act} \pol(\act) [ \util(\act + i - j) - \util(\act - i - j)]
\end{eqnarray*}
Now, it follows that 
\begin{align*}
    &\util (\pol + i -j ) - \util(\pol - i + j) \\
    & = \util(\pol + i - j) - \util(\pol - i -j) + \util (\pol - i -j ) - \util(\pol - i + j) \\
    & = \emc_i (\util, \pol - i -j) - \emc_j (\util, \pol - i -j).
\end{align*}
This shows that $\util(\pol+i-j) \geq \util (\pol-i+j)$ is equivalent to $\emc_i (\util, \pol -i -j ) \geq \emc_j (\util, \pol-i-j)$.
\end{proof}

\begin{proof}[Proof of Lemma \ref{lemma:local-change-stability equivalence}]
This readily follows from the definition of the EMC of individual $i$ under $\pol$ as $\emc_i (\util ,\pol) = \sum_{\act \in \Act} \pol(\act) [\util(\act + i) - \util(\act)] =  \util(\pol + i) - \util (\pol)$.
\end{proof}

\begin{proof}[Proof of Lemma \ref{lemma:determinstic_optimal_meritocractic}]
Let 
$\Act^* \defn \{ \act \in \Act \colon \util(\act) = \max_{\act' \in \Act} \util(\act') \}$
denote the set of utility maximising selections.  
Note that for any optimal (utility maximising) policy $\pol$ with $\pol(\act) > 0$, it must hold that $\act \in \Act^*$. 
In particular, if $\pi$ is a deterministic optimal policy, then $\pi(\act^*) = 1$ for some $\act^* \in \Act^*$. Let us now show that such a deterministic optimal policy $\pi$ that selects $\act^* \in \Act^*$ with probability one is swap stable. For all $i,j \in \pop$ with $a^*_i = a^*_j$ we have nothing to check. Hence, consider $i,j \in \pop$ such that $\act^*_i = 1$ and $\act^*_j = 0$. We then find that 
\begin{eqnarray*}
&&    \util (\pi + i -j) - \util(\pi -i +j) = \util(\act^* + i -j )\\
&&\qquad - \util (\act^* -i +j) = \util(\act^*) - \util(\act^* - i +j ) \geq 0,
\end{eqnarray*}
by optimality of set $\act^*$. We have thus shown that $\pol$ is swap stable. To show that $\pol$ is locally stable, note that if $\pol$ is any (possibly stochastic) optimal policy, we have that $\emc_i (\util, \pol) \leq 0$ for all $i \in \pop$. This follows directly from the fact that the support of $\pol$ must be a subset of $\Act^*$ and $\util(\act^* + i) - \util (\act^*) \leq 0$ for all $i \in \pop$. By merit of \cref{lemma:local-change-stability equivalence}, we have therefore shown that any optimal policy $\pol$ is locally stable, including all stochastic optimal policies. 
\end{proof}

\begin{proof}[Proof of Lemma \ref{lemma:stochastic_optimal_not_meritocractic}]
By counterexample. Let 
$\Act^* \defn \{ \act \in \Act \colon \util(\act) = \max_{\act' \in \Act} \util(\act') \}$
denote the set of utility maximising selections. For $i, j \in \pop$, let there be $\act \in \Act^*$ with $\act_i = 1$, $\act_j = 0$, and $ \mathbf{b} \in \Act^*$ with $\mathbf{b}_i = 0$, $\mathbf{b}_j = 1$. Since $\act, \mathbf{b} \in \Act^*$, the policy \begin{align*}
    \pol(\act) = {}^{2}/{}_{3}, \quad \pol(\bct)= {}^{1}/{}_{3} = 1- \pol(\act)
\end{align*}
is optimal. In particular, note that the probability of selecting $i$ under $\pol$ is strictly greater than the probability of selecting $j$. Thus, for swap stability to hold, $\pol$ must satisfy $\util(\pol + i -j) \geq \util(\pol - i + j)$. Now, let us assume that $\util(\act-i+j) = \util(\act)$ and $\util(\bct+i-j) < \util(\bct)$, which implies
\begin{align}\label{equation:stochastic_not_meritocractic_1}
2(\util (\act) - \util(\act - i + j)) < \util(\bct) - \util(\bct+i-j).
\end{align}
This can be understood as saying that $j$ is more ``compatible'' than $i$, since $j$ achieves maximal utility when added to $\act$, but adding $i$ to $\bct$ yields sub-optimal utility.
Now, by equation \eqref{equation:stochastic_not_meritocractic_1}, we have that
\begin{eqnarray*}
&&    \util (\pi + i -j ) - \util (\pi -i +j) = \frac{2}{3} (\util(\act) - \util(\act-i +j)) \\
&&\qquad + \frac{1}{3} ( \util (\bct +i -j) - \util (\bct) ) < 0.
\end{eqnarray*}
Thus, $\pi$ is not swap stable. 
\end{proof}

\subsection{Proofs from \cref{sec:util-maxim-marg}}\label{app:proof_section_5}

\begin{proof}[Proof of Lemma \ref{lemma:softmax no. 2}]
Define $Z(\vparam) = \sum_{\act \in \Act} e^{\beta \vparam^\top \act}$ and note that 
\begin{align*}
    \frac{\partial}{\partial \param_i} \log Z(\vparam) = \frac{1}{Z(\vparam)} \sum_{\act \in \Act \colon a_i = 1} e^{\beta \vparam^\top \act} = \beta \sum_{\act \in \Act \colon a_i =1 } \pol_\vparam (\act),
\end{align*}
Recall our convention that $\pol_\vparam (a_i = x) \defn \sum_{\act \in \Act \colon a_i = x } \pol_\vparam (\act)$ for $x \in \{0,1\}$. In the following, we omit the factor $\beta$ as it will be nothing but a constant factor. 
\begin{eqnarray*}
&&    \frac{\partial}{\partial \param_i} \util (\pol_\vparam)  \\
&&= \sum_{\act \in \Act} \frac{\partial}{\partial \param_i} \pol_\vparam (\act) \util (\act) \\ 
&& = \sum_{\act \in \Act} \pol_\vparam (\act) \frac{\partial}{\partial \param_i} \log( \pol_\vparam (\act)) \util (\act) \\
 && = \sum_{\act \in \Act} \pol_\vparam (\act) \frac{\partial}{\partial \param_i} (\vparam^\top \act - \log Z(\vparam)) \util (\act) \\
   && = \sum_{\act \in \Act} \pol_\vparam (\act) \big( a_i - \sum_{\act^\prime \in \Act \colon a^\prime_i = 1} \pol (\act^\prime) \big) \util (\act) \\
    && = \sum_{\act \in \Act \colon a_i =1} \pol_\vparam (\act) \util (\act) - \pol_\vparam (a_i =1) \util(\pol_\vparam) \\
    && = (1-\pol_\vparam ( a_i =1) )\\
&&  \sum_{\act \in \Act \colon a_i = 1} \pol_\vparam (\act) \util (\act) - \pol_\vparam (a_i =1) \\
&&\sum_{\act \in \Act \colon a_i = 0} \pol_\vparam (\act) \util (\act) 
     = \pol_\vparam (a_i =0) \, e^{\beta \param_i} \\
&& \sum_{\act \in \Act \colon a_i = 0} \pol_\vparam (\act) \util (\act + i) - \pol_\vparam (a_i =  1) \\
&&\sum_{\act \in \Act \colon a_i = 0} \pol_\vparam (\act) \util (\act) 
     = \pol_\vparam ( a_i =1)\\
&& \sum_{\act \in \Act \colon a_i = 0} \pol_\vparam (\act) \util (\act + i) -  \pol_\vparam (a_i =  1) \\
&&\sum_{\act \in \Act \colon a_i = 0} \pol_\vparam (\act) \util (\act) 
     = \pol_\vparam (a_i =1)\\
&& \sum_{\act \in \Act \colon a_i = 0} \pol_\vparam (\act)  [\util (\act + i) - \util (\act)] 
     = \pol_\vparam (a_i =1)  \\
&&\sum_{\act \in \Act} \pol_\vparam (\act)[\util (\act + i) - \util (\act)] \\
&&\qquad {\textit{(for $\act$ with $\act_i = 1$ the summand is zero)}}\\ 
    && = \pol_\vparam (a_i =1) \,  \emc_i (\util , \pol_\vparam).
\end{eqnarray*}
\end{proof}

\begin{proof}[Proof of Lemma \ref{lemma:multinomial gradient}]
If $\pol_\vparam(a_i = 0) > 0$, we obtain
\begin{eqnarray*}
&&    \frac{\partial}{\partial \param_i} \util (\pol_\vparam)  = \frac{\partial}{\partial \param_i} \sum_{\act \in \Act} \pol_\vparam (\act) \util (\act) \\
    && = \sum_{\act \in \Act} \frac{\partial}{\partial \param_i} \pol_{\param_i} (a_i) \prod_{j \neq i} \pol_{\param_j} (a_j) \util(\act) \\
    && = \sum_{\act \in \Act} (\ind{a_i = 1} - \ind{a_i = 0} ) \prod_{j \neq i} \pol_{\param_j} (a_j ) \util(\act ) \\
    && = \sum_{\act \in \Act \colon a_i = 0} \prod_{j \neq i} \pol_{\param_i} (a_i)  \util (\act + i)\\
&&  - \sum_{\act \in \Act \colon a_i = 0}  \prod_{j \neq i} \pol_{\param_i} (a_i)  \util (\act) \\
    && = \sum_{\act \in \Act \colon a_i = 0}  \prod_{j \neq i} \pol_{\param_j} (a_j)  [ \util (\act + i ) - \util (\act ) ]  \\
    && = \frac{1}{\pol_{\param_i} (a_i = 0)} \sum_{\act \in \Act \colon a_i = 0} \pol(\act ) [ \util (\act + i) - \util (\act ) ]  \\
    && = \frac{1}{\pol_{\param_i} (a_i = 0)} \sum_{\act \in \Act} \pol(\act ) [ \util (\act + i) - \util (\act) ]\\
&& \qquad {\textit{(for $\act$ with $\act_i = 1$ the marginal contribution is zero)}}\\ 
    && = \frac{\emc_i (\util ,\pol_\vparam )}{\pol_{\param_i} (a_i = 0 )}. 
\end{eqnarray*}
In line four of our calculations, we needed the following technical identity
\begin{align*}
    \sum_{\act \in \Act} \ind{a_i = 1} \prod_{j \neq i} \pol_{\param_j} (\act_j) \util (\act) = \sum_{\act \in \Act \colon \act_i = 0} \prod_{j \neq i} \pol_{\param_j} (\act_j) \util (\act + i).
\end{align*}
For completeness, we rigorously prove it here. The left hand side is equal to 
\begin{align*}
    \sum_{\act \in \Act \colon \act_i = 1} \prod_{j \neq i } \pol_{\param_j} (\act_j ) \util (\act) = \sum_{\act \in \Act \colon \act_j = 1} \prod_{j \neq i} \pol_{\param_j} (\act_j) \util (\act+ i).
\end{align*}
Now, let $\act$ and $\act^\prime$ only differ in the $i$-th element, namely, $\act_i = 1$ and $\act^\prime_i = 0$. Clearly,
\begin{align*}
    \prod_{j \neq i} \pol_{\param_j} (a_j) = \prod_{j \neq i} \pol_{\param_j} (a^\prime_j) \quad  \text{ and }  \quad \util(\act + i) = \util (\act^\prime + i),
\end{align*}
which yields
\begin{align*}
    \sum_{\act \in \Act \colon a_i = 1} \prod_{j \neq i} \pol_{\param_j} (a_j) \util (\act+ i) = \sum_{\act \in \Act \colon a_i = 0} \prod_{j \neq i} \pol_{\param_j} (a_j) \util (\act + i).
\end{align*}
\end{proof}

\section{Details of the Experimental Setup}\label{app:details_of_experiments}

\subsection{Details of the Policy Gradient Algorithm}
For the constrained optimisation problem (under statistical parity), we maximise the penalised utility: 
\begin{eqnarray*}
&&    U'(\pi) = U(\pi) - \lambda f(\pi) \text{ with } f(\pi) = (\pol(\act_i = 1 \mid i \text{ is male})\\
 && - \pol(\act_j = 1 \mid j \text{ is female}))^2. 
\end{eqnarray*}
This leads to the constraint policy gradient algorithm.

\begin{algorithm}[H]
	\caption{Constrained policy gradient algorithm}
	\label{alg:constrained_pol_opt}
	\begin{algorithmic}[1]
		\State Input: A population $\pop$ with features $\bx$, a utility function $\ut$, a constraint function $f$ and $\varepsilon > 0$.
		\State Initialise $\vparam^0$, threshold $\delta > 0$, learning rate $\eta^0 > 0$ and $\lambda^0\leq 0$
		\While{$\| \vparam^{i+1} - \vparam^i \| > \delta$}
		\State Evaluate $\nabla_{\theta} (\util (\pol_\vparam , \bx) -\lambda^i (f(\pol_\vparam,\bx) - \varepsilon))$ and $\nabla_{\lambda} (\util (\pol_\vparam , \bx) -\lambda (f(\pol_\vparam,\bx) - \varepsilon))$ using $\bx$
		\State $\vparam^{i+1} \leftarrow \vparam^i + \eta^i \left\{\nabla_{\theta} \left[\util (\pol_\vparam , \bx) -\lambda (f(\pol_\vparam,\bx) - \varepsilon) \right]_{\vparam=\vparam^{i}}\right\}$
		\State $\lambda^{i+1} \leftarrow \lambda^i + \eta^i \left\{\nabla_{\lambda} \left[\util (\pol_\vparam , \bx) -\lambda (f(\pol_\vparam,\bx) - \varepsilon) \right]_{\lambda = \lambda^i} \right\}$
        \State $i+\!+$
		\EndWhile
		\State \textbf{return} $\pi_{\vparam^{i+1}}$
	\end{algorithmic}
\end{algorithm}
In the algorithm above, $\eta^i$ is updated according to a schedule so that $\eta^{i+1} = 0.9 \eta^i$ whenever convergence stalls.  

\paragraph{Utility gradient.} Note that for all policies we consider $\pol(\act \mid \bx) = \prod_{i \in \pop} \pol(\act_i \mid \bx)$. For the separable linear policies, the gradient $\nabla_{\vparam} \util(\pol_\vparam, \bx)$ is given by Lemma \ref{lemma:multinomial gradient}. 
Then, to estimate $\emc_i (\util, \pol_\vparam, \bx)$, we sample $n=40$ sets $a_1, \dots, a_n \in \Act$ from $\pol_\vparam (\act \mid \bx)$ and approximate the EMC of individual $i$ given $\pol_\vparam$ and $\bx$ by 
\begin{align*}
    \emc_i (\util, \pol_\vparam, \bx) \approx n^{-1} \sum_{k=1}^n [\util(\act_k +i , \bx) - \util(\act_k, \bx)].
\end{align*}
Note that due to the separable structure of $\pol_\vparam$, we have $\pol_\vparam (\act_i = 1 \mid \bx) = \sum_{\act \in \Act \colon \act_i = 1} \pol_{\vparam} (\act \mid \bx) = \param_i$ and are thus not required to approximate this value. Moreover, recall that, using our predictive model, we obtain the expected utility of selecting a set $\act$ given $\bx$ by marginalising over the outcomes: $\util(\act, \bx) = \sum_{\by \in \CY} \Pr (\by \mid \act, \bx) u(\act, \by)$.
For the logit threshold polices with $\vparam = (\param_1, \dots, \param_{|\mathcal{X|}})$, the gradient can be computed as
\begin{eqnarray*}
    && \frac{\partial}{\partial \param_i} \util(\pol_\vparam, \bx) \\
    && = \sum_{\act \in \Act} \frac{\partial}{\partial \param_i} \pol_\vparam(\act \mid \bx) \util(\act, \bx) \\
    && = \sum_{\act \in \Act}  \pol_\vparam(\act \mid \bx) \util(\act , \bx)  \frac{\partial}{\partial \param_i} \log (\pol_\vparam(\act \mid \bx) ) \\
    && = \sum_{\act \in \Act} \pol_\vparam(\act \mid \bx) \util(\act, \bx) \sum_{j = 1}^N \frac{\partial}{\partial \param_i} \log(\pol_\vparam (\act_j \mid \bx_j)) \\
    && = \sum_{\act \in \Act}  \pol_\vparam(\act \mid \bx) \util(\act, \bx) \sum_{j=1}^N \bx_{j, i} \Big( \ind{\act_j = 1}\\
&&\qquad \pol_\vparam(\act_j = 0 \mid \bx_j) - \ind{\act_j = 0} \pol_\vparam(\act_j = 1 \mid \bx_j) \Big).
\end{eqnarray*}
In our experiments, we again sample $n=40$ sets $\act_1, \dots, \act_n \in \Act$ from $\pol_\vparam (\act \mid \bx)$, and then approximate the gradient by 
\begin{eqnarray*}
&& \hspace*{-11pt}   \frac{\partial}{\partial \param_i} \util(\pol_\vparam, \bx)  \approx n^{-1} \sum_{k=1}^n \util(\act_k, \bx) \sum_{j=1}^N \bx_{j,i}\\
&&\hspace*{-11pt}\qquad \left(  \ind{\act_{k,j} = 1} \pol_\vparam (\act_j = 0 \mid \bx_j) - \ind{\act_{k, j} = 0} \pol_\vparam (\act_j = 1 \mid \bx_j) \right).
\end{eqnarray*}
Note that we can compute $\pol_\vparam(\act_j = 0 \mid \bx_j)$ directly as $\pol_\vparam(\act_j = 0 \mid \bx_j) = (1+e^{\vparam^\top \bx_j})^{-1}$.

\paragraph{Constraint gradient.} We can rewrite the penalty term $f$ using the affiliation vectors $M,F \in \{0,1\}^N$ with $M_i = 1$ if individual $i$ is male and $F_i = 1$ if individual $i$ is female. Then, $\lVert M \rVert_1$ 
denotes the number of males and $\lVert F \rVert_1$ 
denotes the number of females ($\lVert M \rVert_1 + \lVert F \rVert_1 = \pop$). We get the following expression for the penalty term $f(\pol)$:
\begin{align*}
    f(\pol) = \left( \frac{\sum_{i \colon  M_i = 1}\pi(a_i = 1\mid \bx)}{\lVert M \rVert_1 } - \frac{\sum_{i \colon F_i = 1}\pi(a_i = 1 \mid \bx)}{\lVert F \rVert_1 } \right)^2 - \varepsilon
\end{align*}
Its gradient w.r.t.\ $\pol$ is then given by
\begin{eqnarray*}
&&    \nabla_{\pol} f(\pol) = 2\left( \frac{\sum_{i \colon  M_i = 1}\pi(a_i = 1 \mid \bx)}{\lVert M \rVert_1 } - \frac{\sum_{i \colon F_i = 1}\pi(a_i = 1 \mid \bx)}{\lVert F \rVert_1 }  \right) \\
&&\qquad \left( \frac{M}{\lVert M \rVert_1} - \frac{F}{\lVert F \rVert_1} \right).
\end{eqnarray*}

\paragraph{Policy gradient.}
For linear separable policies, the gradient of $\pol$ w.r.t.\ $\vparam = (\param_1, \dots, \param_N)$ is given by a diagonal $N\times N$-matrix with entries $( \ind{a_i = 1} - \ind{a_i = 0})_{i \in [N]}$.
For the logit threshold policy with $\vparam = (\param_1, \dots, \param_m)$ ($m = |\mathcal{X}|$), we have 
\begin{eqnarray*}
&&    \nabla_\vparam \pol_\vparam = \big[\bx_{i,j} \big(\ind{a_i = 1} e^{-\vparam^\top \bx_i} (1+ e^{-\vparam^\top \bx_i})^{-2}\\
&&\qquad - \ind{a_i = 0} e^{\vparam^\top \bx_i} (1+ e^{\vparam^\top \bx_i})^{-2} \big)   \big]_{i \in \pop}^{j \in [m]}.
\end{eqnarray*}
As before, we then sample from $\pol(\act \mid \bx)$ to approximate the gradient $\nabla_{\vparam} \pol_\vparam$ and thereby obtain $\nabla_\vparam f(\pol) = \nabla_{\vparam} \pol_\vparam \nabla_{\pol} f(\pol)$.



\subsection{Simulator Description}\label{app:simulator}
\label{subsec:simul}

In the following, we describe the simulator that was used to generate the data for our experiments. The simulator generates a population of applicants in two steps: 1) generating applicants' features and 2) generating outcomes - grades in three courses given the applicant features (assuming admission). The dataset itself will not be published as it is not publicly open. 

\paragraph{Generating students.}
We wish to generate a population of applicants for a specific study program. To this end, we use ctgan\footnote{\url{https://github.com/sdv-dev/CTGAN}}, a deep learning based synthetic data generator for tabular data, that can learn from real data and generate synthetic clones with high fidelity.
We provide this generator with a clean version of the application data table (described in \ref{sec:experiments}). The data is cleaned by the following steps: First, we keep applications for the selected study program. Then, we filter out all applications that are not through normal admission or do not have a valid application, resulting in roughly 40,000 applications.
In some cases, attributes of an applicant might be missing in the application for one program but are present in an application for a different program. We fill missing values of GPA, science points and language points for applicants using the matching values in different study programs in the same year (if exist). After this step, we select the following features: application year, gender, country code for citizenship, country code for educational background, priority, GPA, science points, language points and other points. The age at time of application is also added (calculated as the difference between application year and year of birth). Finally, all applications with remaining missing values are removed, resulting in approximately 30,000 applications. The ctgan generator is trained using this data over $10$ epochs with default parameters (see Figure~\ref{fig:simulated histograms} for evaluation).

\begin{figure*}[t]
\centering
\includegraphics[scale=0.4]{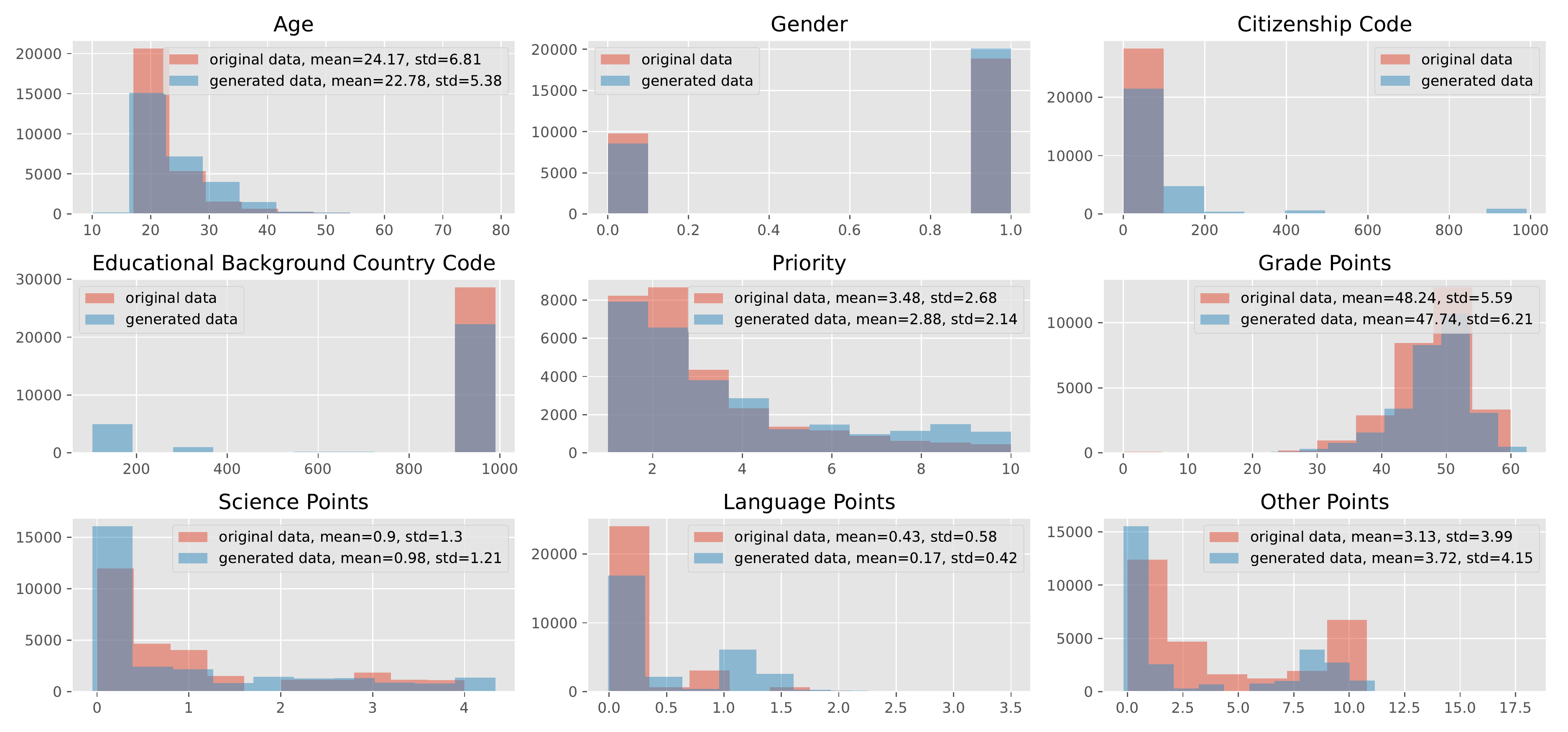}
\caption{Histograms of the different features for the original data (training set) and for the samples generated by CTGAN, using sample size equal to the size of the training set.}
\label{fig:simulated histograms}
\end{figure*}


\paragraph{Generating outcomes.}
In addition to applicant features, we also generate outcomes for each candidate - grades in three mandatory courses in the study program (one first year course and two second year courses). Naturally, the data only provides grades for admitted students. Thus, we use existing data to train a linear regression model and use it to produce outcomes for generated applicants.

We construct the training data from the two data tables described in \ref{sec:experiments}. The clean version of the application data is used as training features, with the additional following steps: For each applicant, only the last application to this program is kept, the rest are dropped from the table. This results in $~15,000$ applicants. We wish to consider the set of accepted  students who took the first year course, had data of at least two years in the program, had the right to study and the right to take the exam. This filter results in $~900$ students. Then, we also filter out students who did not have a grade in the A-F scale in the first year course. We do include students with the mark 'X' stating they did not show up for the exam, and consider it as 'F'. The final set includes $557$ students. 
The training targets are the last grades of each student in the final set in the three selected courses.
Before training, all numeric features are scaled between $0$ and $1$, categorical features are one-hot encoded, and the targets are translated to numeric values in $[0,1]$ as follows: 'A': 1, 'B':0.8 , 'C':0.6 , 'D':0.4 , 'E':0.2 , 'F':0 . Missing values (which may exist for the second year courses, for example in cases of dropouts) are considered as $0$.
A linear regression is fitted to the data using default parameters of scikit-learn. Predictions made by the model are clipped to $[0,1]$, resulting in r2 score of $0.05$.

\paragraph{Modeling past admissions.} To generate the data of students from prior years, we use a model representing the admission policy of the study program that was used in earlier years. This admission policy is modeled by a logistic regression with default parameters trained on approximately $30,000$ samples with $~7\%$ positive samples. The training data is identical to the the training data used for generating students, except for the labels which now indicate whether the candidate was offered and accepted the offer to study in the program. Before training, all numeric features are scaled between $0$ and $1$, categorical features are one-hot encoded.
In Norway, a stable marriage algorithm (from the student's perspective) is used to match students to study programs. Thus, a model that can only observe data for one program cannot be completely accurate. Yet, since the features include priority, we can still provide a reasonable prediction. The accuracy achieved by the model given a train-test split of $75:25$ is $93\%$. In the admission process we simulate, the $k$ best scores (probability of being admitted) are being selected.

\subsection{Additional Results}
In addition to Figure \ref{fig:college_admissions}, we include results comprising the performance of the algorithms with respect to the actual outcomes as well as the predicted outcomes in Figure \ref{fig:additional_results}. As shown, the results are almost identical due to the high accuracy ($97\%$) of the regression model. 
These results also include the performance of the historical policy (described in \ref{subsec:simul} under "Modeling past admissions". For these experiments, the policy has selected the top $k$ candidates, where $k$ is set to be identical to the set size selected by the greedy policy. The historical policy does not maximize the utility described in our experiments, thus it seems that it manages to achieve a reasonable utility due to the predetermined set size $k$. This policy has large values of deviation from meritocracy, which is expected because it does not select necessarily the best candidates, but also takes into account their priorities.
In the constrained case, we see a significant increase of $\devchange$, along with an increase of $\devswap$ and decreased utility.


\begin{figure*}[h]
\centering
\begin{subfigure}{.4\textwidth}
  \centering
  \includegraphics[width=\textwidth]{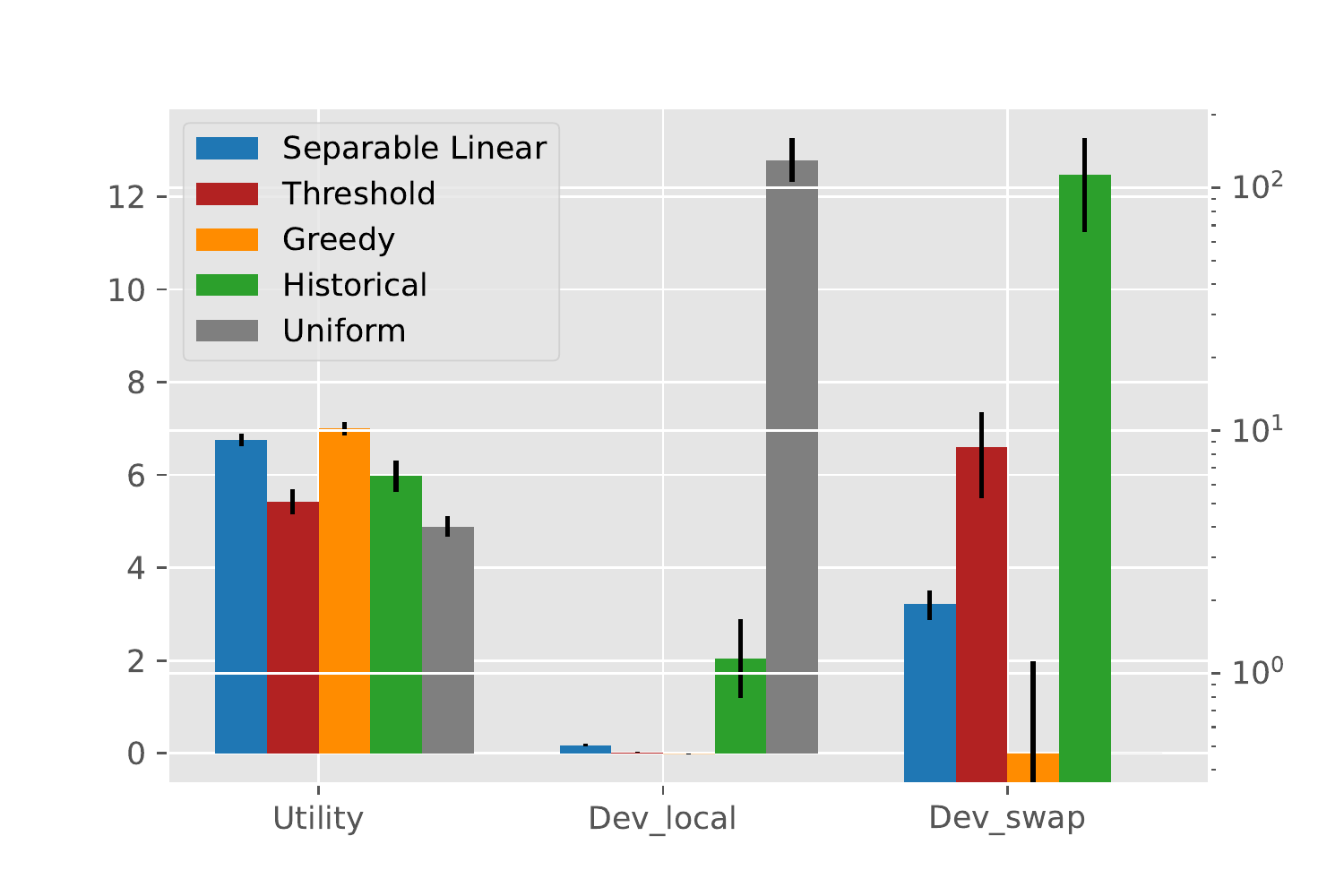}
  \caption{Unconstrained Optimisation (True Outcomes)}
  \label{fig:unconstrained_true}
\end{subfigure}
\begin{subfigure}{.4\textwidth}
  \centering
  \includegraphics[width=\textwidth]{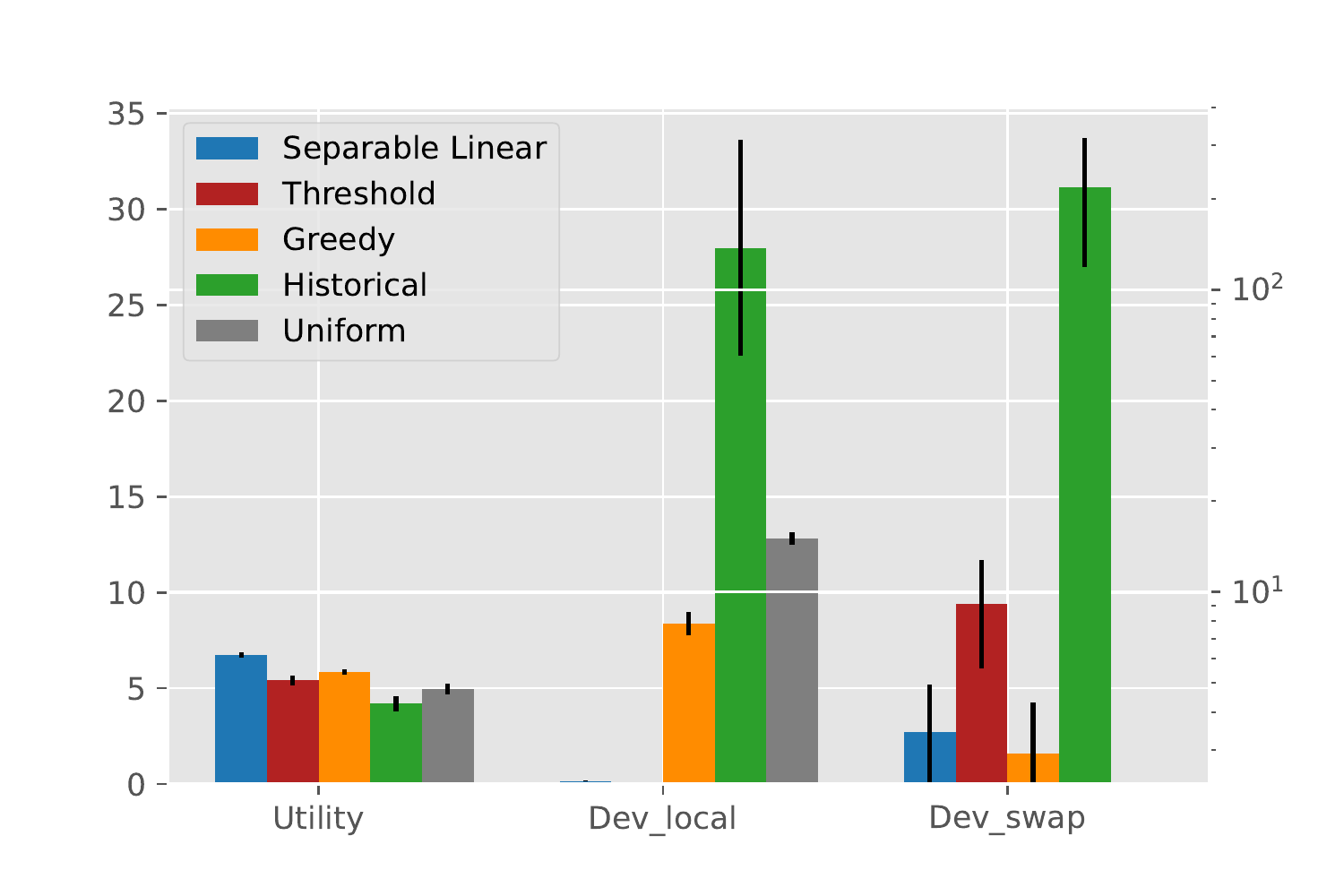}
  \caption{Constrained Optimisation (True Outcomes)}
  \label{fig:unconstrained_estimated}
\end{subfigure}
\begin{subfigure}{.4\textwidth}
  \centering
  \includegraphics[width=\textwidth]{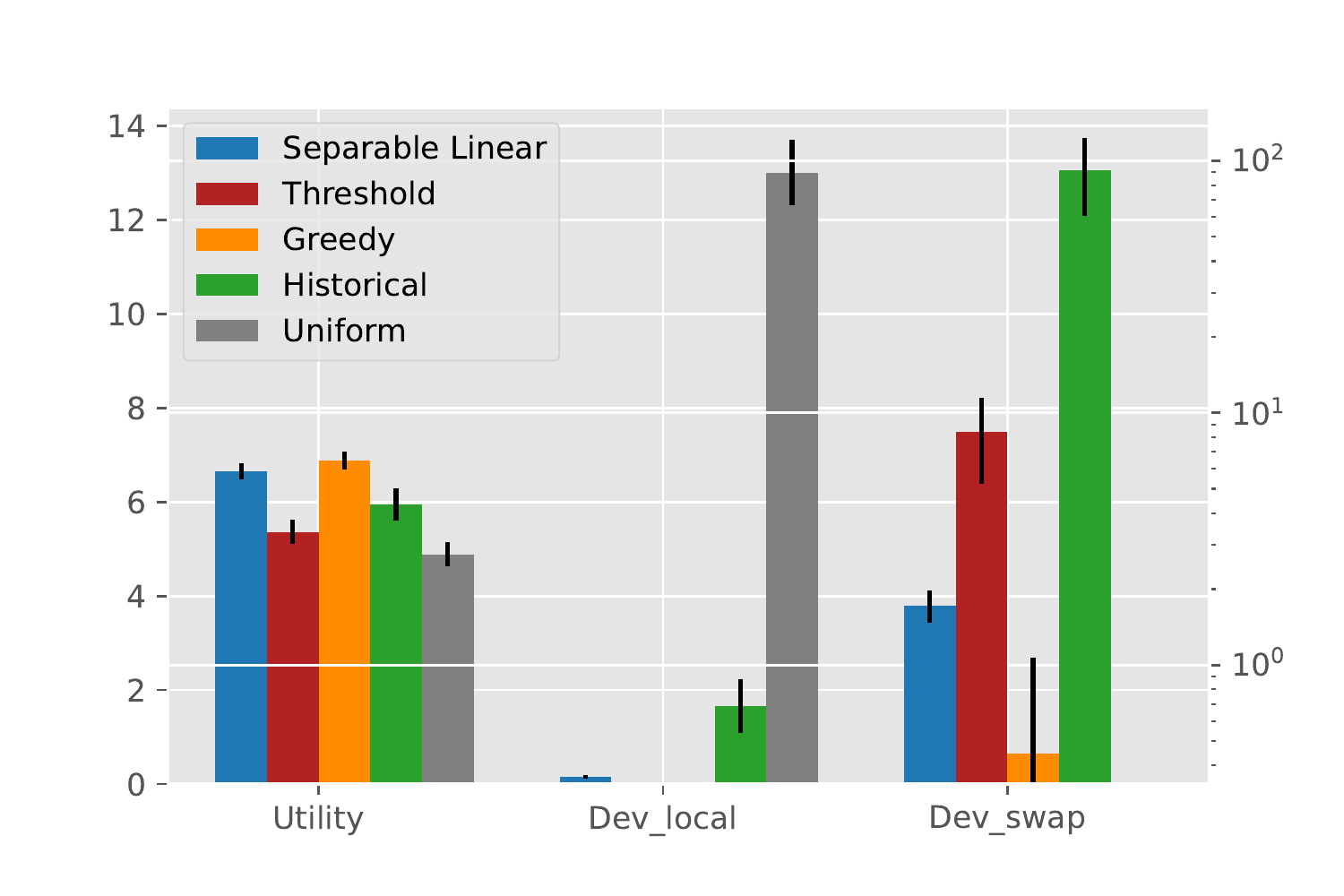}
  \caption{Unconstrained Optimisation (Predicted Outcomes)}
  \label{fig:constrained_true}
\end{subfigure}
\begin{subfigure}{.4\textwidth}
  \centering
  \includegraphics[width=\textwidth]{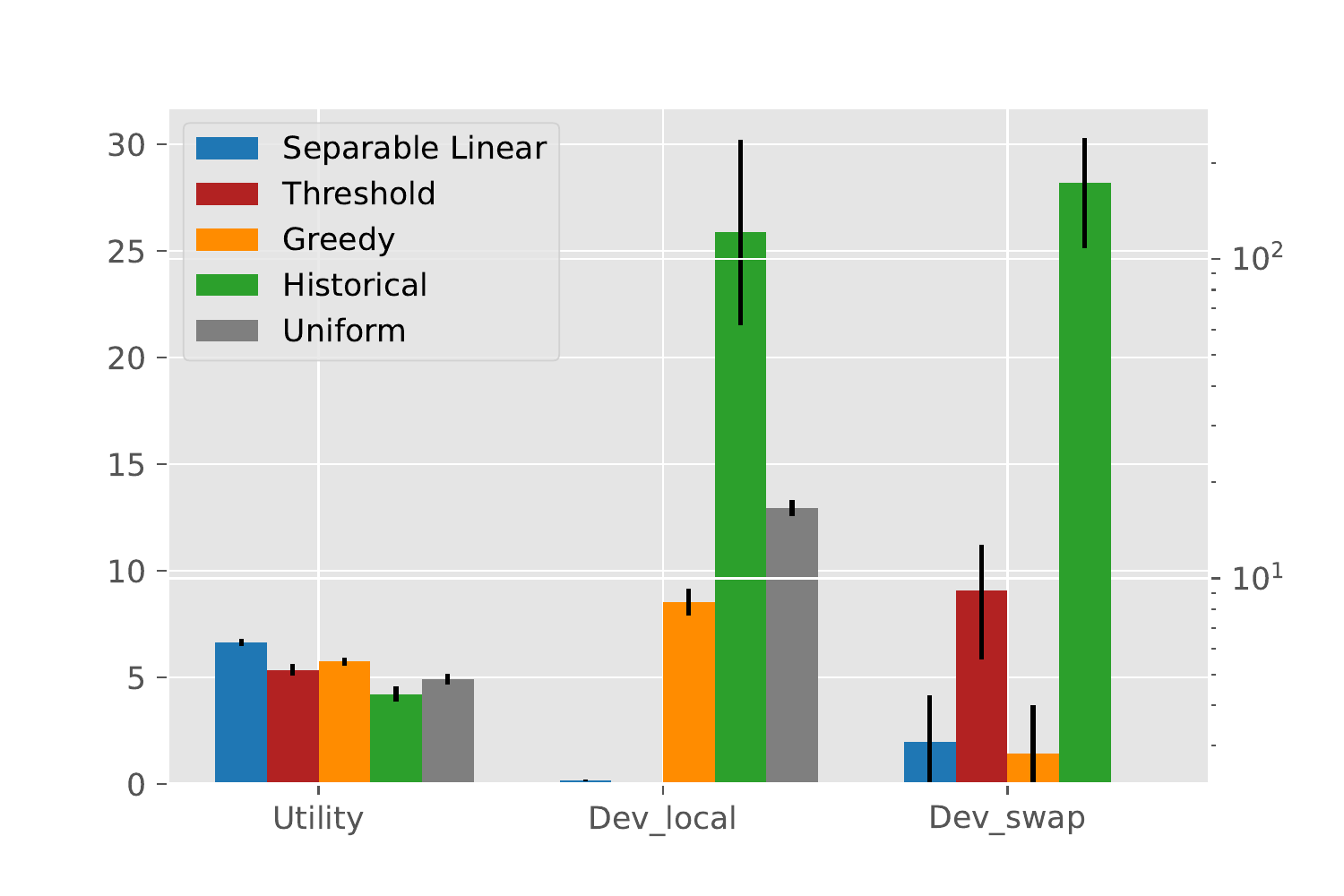}
  \caption{Constrained Optimisation (Predicted Outcomes)}
  \label{fig:constrained_estimated}
\end{subfigure}
\caption{Expected utility, $\devswap$, and $\devchange$ w.r.t.\ the true outcomes and the predicted outcomes of individuals for log-linear utility with $200$ applicants, selection cost $c=0.05$, and bias $\varepsilon = 0.1$. The results are averaged over $5$ repeats (each with different simulated data). The black lines represent the standard deviation. $\devswap$ is presented in log scale (appears on the right)}
\label{fig:additional_results}
\end{figure*}

\end{document}